\newcommand{\nj}{\hat{n}_j(t)}
\newcommand{\njm}{\hat{n}_j(t-1)}
\newcommand{\nlm}{\hat{n}_l(t-1)}
\newcommand{\nst}{\hat{n}_{\st(t)}}
\newcommand{\nstm}{\hat{n}_{\st(t-1)}}
\newcommand{\nstau}{\hat{n}_{\st(\tau)}}
\newcommand{\Nstm}{\hat{N}_{\st(1:t-1)}}
\newcommand{\xst}{X_{\sT(t)}}
\newcommand{\wfr}{\frac{w_i(t)}{W_t}}
\newcommand{\yh}{\hat{y}_i(t)}
\newcommand{\xh}{\hat{x}_j(t)}
\newcommand{\vst}{\frac{c}{p_{\sT(t)} \sqrt{KT}}}
\newcommand{\summb}{\sum_{\sT \in \mb}}
\newcommand{\vh}{\hat{u}_i(t)}
\newcommand{\ub}{\frac{c \hat{u}_i(t)}{\sqrt{KT}}}
\newcommand{\yt}{\tilde{y}_i(t)}
\newcommand{\yrt}{\tilde{y}_r(t)}
\newcommand{\lnratioT}{\ln \Big( \frac{W_T}{W_1} \Big)} 
\newcommand{\sumrA}{\sum_{r \in A^*}}   
\newcommand{\sumT}{\sum_{t=1}^{T}}   
\newcommand{\sumTh}{\sum_{j \in [K]-U_0(t)}}
\newcommand{\sumK}{\sum_{j=1}^K}   
\newcommand{\sumNr}{\sum_{i=1}^{N_r}}   
\newcommand{\sumNrL}{\sum\limits_{i=1}^{N_r}}   
\newcommand{\sumsrt}{\sum\limits_{\st(t)=r}}
\newcommand{\sumst}{\sum\limits_{\st}}
\newcommand{\sumstm}{\sum\limits_{\st(1:t-1)}}
\newcommand{\sumsjtm}{\sum\limits_{\st(t-1)=j}}
\newcommand{\Sw}{S} 
\newcommand{\1}{\mathds{1}} 
\newcommand{\mb}{\textbf{M}^*_T} 
\newcommand{\lns}{\ln (\frac{eK(T-1)}{\Sw -1})} 
\newcommand{\lnsd}{\ln \! \Big(\! \frac{eK(T-1)}{(\Sw -1)\delta} \! \Big)}
\newcommand{\st}{\textbf{s}_t} 
\newcommand{\sT}{\textbf{s}_T} 
\newcommand{\MT}{\textbf{M}_T} 
\newcommand{\wst}{w_{\textbf{s}_t}} 
\newcommand{\wstm}{w_{\st(1:t-1)}} 
\newcommand{\pst}{\pi_{\textbf{s}_t}} 
\newcommand{\psst}{\pi(\textbf{s}_t | \textbf{s}_t(t-1))} 
\newcommand{\pstm}{\pi_{\st(1:t-1)}}
\newcommand{\xhst}{\hat{x}_{\textbf{s}_t(t)}} 
\newcommand{\etav}{\frac{m \gamma}{2K}} 
\newcommand{\gammavms}{\sqrt{\frac{K \lns}{mT}}} 
\newcommand{\gammavmp}{\sqrt{\frac{K \ln \frac{N_r}{m}}{mT}}} 
\newcommand{\cvms}{\sqrt{m \Sw \lnsd}}
\newcommand{\cvmp}{\sqrt{m \ln \frac{N_r}{\delta}}}
\newcommand{\betav}{\frac{\Sw -1}{T-1}}
\newcommand{\betave}{\frac{\Sw -1}{e(T-1)}}
\newcommand{\yit}{Y_i(t)}
\DeclareMathOperator*{\argmax}{arg\,max}
\newtheorem{theorem}{Theorem}[section]
\newtheorem{corollary}{Corollary}[section]
\newtheorem{lemma}[theorem]{Lemma}
\newtheorem{remark}{Remark}[section]
\newcommand{\define}{\overset{\triangle}{=}}
\begin{document}
\renewcommand{\thepage}{}

\title{Minimax Optimal Algorithms for Adversarial Bandit Problem with Multiple Plays}

\author{N. Mert Vural, Hakan Gokcesu, Kaan Gokcesu and Suleyman S. Kozat, \textit{Senior Member, IEEE}
\thanks{This work is supported in part by Turkish Academy of Sciences Outstanding Researcher Programme.}
\thanks{ N. M. Vural, H. Gokcesu and S. S. Kozat are with the Department of Electrical and Electronics
Engineering, Bilkent University, Ankara 06800, Turkey, e-mail: vural@ee.bilkent.edu.tr, 
hgokcesu@ee.bilkent.edu.tr, kozat@ee.bilkent.edu.tr.}
\thanks{K. Gokcesu is with the Department of Electrical Engineering and Computer
Science, Massachusetts Institute of Technology, Cambridge, MA 02139 USA,
e-mail: gokcesu@mit.edu.}
}


\maketitle
\begin{abstract}
We investigate the adversarial bandit problem with multiple
plays under semi-bandit feedback. We introduce a highly efficient
algorithm that asymptotically achieves the performance of the
best switching $m$-arm strategy with minimax optimal regret bounds. 
To construct our algorithm, we introduce a new expert advice algorithm for the multiple-play setting.
By using our expert advice algorithm, we
additionally improve the best-known high-probability bound for the multi-play setting by $O(\sqrt{m})$.
Our results are guaranteed to hold in an individual sequence
manner since we have no statistical assumption on the bandit arm
gains.  Through an extensive set of experiments involving
synthetic and real data, we demonstrate significant performance
gains achieved by the proposed algorithm with respect to the
state-of-the-art algorithms.
\end{abstract}
\begin{keywords}
Adversarial multi-armed bandit, multiple plays, switching bandit, minimax optimal, individual sequence manner
\end{keywords}

\section{Introduction}\label{sec:intro}
\subsection{Preliminaries}
Multi-armed bandit problem is extensively investigated in the  online learning\cite{CBianchi2006,Chang2010,Kaan2018,Tekin2015,Auer1995,Tekin2011} and signal processing \cite{Liu2010,Wang2012,Gai2014,Vakili2013,Liu2} literatures, especially for the applications where feedback is limited, and exploration-exploitation
must be balanced optimally. In the classical framework, the multi-armed bandit problem deals with
choosing a single arm out of $K$ arms at each round so as to maximize the total reward. We study the multiple-play version of this problem, where we choose an $m$ sized subset of $K$ arms at each round. We assume that
\begin{itemize}
\item The size $m$ is
constant throughout the game and known a priori by the learner.
\item The order of arm selections does not have an effect on the arm gains.
\item The total gain of the selected $m$ arms is the sum of the gains of the selected individual arms. 
\item We can observe the gain of \emph{each one of the selected $m$ arms} at the end of each round. Since we can observe
the gains of the individual arms in the selected subset, we also  obtain partial information about the other possible subset selections with common individual arms (semi-bandit feedback).
\end{itemize}
We point out that this framework is extensively used to model several real-life problems such as online shortest path and online advertisement placement\cite{KoolOSP,Nakamura2005}.

We investigate the multi-armed bandit problem with multiple plays (henceforth \emph{the MAB-MP problem}) in an individual sequence framework where we make no statistical assumptions on the data in order to model chaotic, non-stationary or even adversarial environments\cite{Auer1995}. To this end, we evaluate our algorithms from a competitive perspective and define our performance with respect to a competing class of strategies. As the competition class, we use the switching $m$-arm strategies, where the term $m$-arm is used to denote any distinct $m$ arms. We define the class of the switching $m$-arm strategies as the set of all deterministic $m$-arm selection sequences, where there are a total of $\binom{K}{m}^T$ sequences in a $T$ length game. We evaluate our performance with respect to the best strategy (maximum gain) in this class. We note that similar competing classes are widely used in the control theory\cite{18,19}, neural networks\cite{22,23}, universal source coding theory\cite{24,25,26} and computational learning theory\cite{20,21,Vovk1997}, due to their modelling power to construct competitive algorithms that also work under practical conditions.

In the class of the switching $m$-arm strategies, the optimal strategy is, by definition, the one whose $m$-arm selection yields the maximum gain at each round of the game. If the optimal strategy changes its $m$-arm selection $S-1$ times, i.e., $S-1$ switches, we say the optimal strategy has $S$ segments. Each such segment constitutes a part of the game (with possibly different lengths) where the optimum $m$-arm selection stays the same. For this setting, which we will refer as \emph{the tracking the best $m$-arm setting}, we introduce a highly efficient algorithm that asymptotically achieves the performance of the best switching $m$-arm strategy with minimax optimal regret bounds. 

To construct our tracking algorithm, we follow the derandomizing
approach \cite{Vovk1997}. We  consider each $m$-arm strategy as an expert with a predetermined $m$-arm selection sequence, where the number of experts grows with $\binom{K}{m}^T$, and combine them in an expert advice algorithm under semi-bandit feedback. Although we have exponentially many experts, we derive an optimal regret bound with respect to the best $m$-arm strategy with a specific choice of initial weights. We then efficiently implement this algorithm with a weight-sharing network, which requires $O(K \log K)$ time and $O(K)$ space. We note that our algorithm requires prior knowledge of the number of segments in the optimal strategy, i.e., $S$. However, it can be extended to a truly online form, i.e., without any knowledge $S$, by using the analysis in \cite{Kaan2018} with an additional $O(\log T)$ time complexity cost.

We point out that the state-of-the-art expert advice algorithms\cite{Kale2010,Auer1995} cannot combine $m$-arm sequences optimally due to the additional $O(\sqrt{m})$ term in their regret bounds. Therefore, to construct an optimal algorithm, we introduce an optimal expert advice algorithm for the MAB-MP setting. In our expert advice algorithm, we utilize the structure of the expert set in order to improve the regret bounds of the existing expert advice algorithms\cite{Kale2010,Auer1995} up to $O(\sqrt{m})$. We then combine $m$-arm sequences optimally in this algorithm and obtain the minimax optimal regret bound. By using our expert advice algorithm, we additionally improve the best-known high-probability bound\cite{Neu2016} by $O(\sqrt{m})$, hence, close the gap between high-probability bounds\cite{Neu2016} and the expected regret bounds\cite{Kale2010, Uchiya2010}. In the end, we also demonstrate significant performance gains achieved by our algorithms with respect to the state-of-the-art algorithms \cite{Gyorgy2007,Kale2010,Uchiya2010,Neu2016,Auer1995} through an extensive set of experiments involving synthetic and real data.
\subsection{Prior Art and Comparison}
The MAB-MP problem is mainly studied under three types of feedback: The full-information\cite{Takimoto2003}, where the gains of all arms are revealed to the learner, the semi-bandit feedback\cite{Audibert2012,Gyorgy2007,Uchiya2010,Kale2010,Neu2016}, where the gains of the selected $m$ arms are revealed, and the full bandit feedback\cite{Dani,Ab2008,CB2012}, where only the total gain of the selected $m$-arm is revealed. Since our study lies in the semi-bandit scenario, we focus on the relevant studies for the comparison.

The adversarial MAB-MP problem where the player competes against the best fixed $m$-arm under semi-bandit feedback has a regret lower bound of $O(\sqrt{mKT})$\footnote{We use big-$O$ notation, i.e., $O(f(x))$, to ignore constant factors and use soft-$O$ notation, i.e., $\tilde{O}(f(x))$, to ignore the logarithmic factors as well.} for $K$ arms in a $T$ round game \cite{Audibert2012}. On the other hand, a direct application of \textit{Exp3} \cite{Auer1995}, i.e., the state-of-the-art for $m=1$,  achieves a regret bound $O(m^{3/2}\sqrt{K^mT\ln K})$ with $O(K^m)$ time and space complexity. One of the earliest studies to close this performance gap with an efficient algorithm is by Gy{\"o}rgi et al.\ \cite{Gyorgy2007}. They derived a regret bound $O(m^{3/2}\sqrt{KT\ln K} )$ with respect to the best fixed $m$-arm in hindsight with $O(K)$ time complexity. This result is improved by Kale et al.\ \cite{Kale2010} and Uchiya et al.\ \cite{Uchiya2010} whose algorithms guarantee a regret bound $O(\sqrt{mKT \ln (K/m)})$ with $O(K^2)$  and $O(K \log K)$ time complexities respectively. Later, Audibert et al.\ \cite{Audibert2012} achieved the minimax optimal regret bound by the \textit{Online Stochastic Mirror Descent (OSMD)} algorithm. The efficient implementation of \textit{OSMD} is studied by Suehiro et al. \cite{Suehiro2012} whose algorithm has $O(K^6)$ time complexity.

We emphasize that although minimax optimal bound has been achieved, all of these results have been proven to hold only in expectation. In practical applications, these algorithms suffer from the large variance of the unbiased estimator, which leads $O(T^{3/4})$ regret in the worst case\cite{Auer1995,Beyg2011}. This problem is addressed by Gy{\"o}rgi et al.\ \cite{Gyorgy2007} and Neu et al.\ \cite{Neu2016} for the MAB-MP problem. They respectively derived $O(m^{3/2} \sqrt{KT \log (K/\delta)})$ and $O(m  \sqrt{KT \log (K/m \delta)})$ regret bounds holding with probability $1-\delta$.

In this paper, we introduce algorithms that achieve \emph{minimax optimal regret} (up to logarithmic terms) with high probability for both the vanilla MAB-MP and the tracking the best $m$-arm settings. In order to generalize both settings in an optimal manner, we first introduce an optimal expert-mixture algorithm for the MAB-MP problem in Section III. In our expert-mixture algorithm, differing from the state-of-the-art\cite{Kale2010}, we exploit the structure of the expert set, and introduce the notion of underlying experts. By exploiting the structure of the expert set, we improve the regret bound of the state-of-the-art expert mixture algorithm for the MAB-MP setting \cite{Kale2010} up to $O(\sqrt{m})$ and obtain the optimal regret bound against to the best expert, which can follow \emph{any arbitrary strategy}. We then consider the set of the deterministic $m$-arm in our expert-mixture algorithm in Remark III.1 and close the gap between high-probability bounds\cite{Neu2016,Gyorgy2007} and the expected regret bounds\cite{Kale2010,Uchiya2010} for the vanilla MAB-MP setting.

In addition to our improvement in high-probability bound, we use our optimal expert mixture algorithm to develop a tracking algorithm for the MAB-MP setting. We note that when competing against the best switching $m$-arm strategy (as opposed to the best-fixed $m$-arm), the minimax lower bound can be derived as $O(mSKT)$\footnote{When competing against the best switching bandit arm strategy (as opposed to the best fixed arm strategy), we can apply $O(\sqrt{mKT})$ bound separately to each one of $S$ segment (if we know the switching instants). Hence, maximization of the total regret bound yields a minimax bound of $O(\sqrt{mSKT})$ since the square-root function is concave and the bound is maximum when each segment is of equal length $T/S$.}. However, similar to the case of \textit{Exp3}, the direct implementation of the traditional multi-armed bandit algorithms into this problem suffers poor performance guarantees. To the best of our knowledge, only Gy{\"o}rgy et al. \cite{Gyorgy2007} studied competing against the switching $m$-arm sequences and derived $\tilde{O}(m^{3/2}\sqrt{SKT})$ regret bound holding with probability $1-\delta$. In Section IV, by mixing the sets of switching $m$-arm sequences optimally in our expert mixture algorithm, we improve this result to $\tilde{O}(\sqrt{mSKT})$ regret bound holding with probability $1-\frac{S-1}{e(T-1)}\delta$. We note that the computational complexity of our final algorithm is $O(K \log K)$, whereas Gy{\"o}rgy et al.'s algorithm \cite[Section 6]{Gyorgy2007} requires $O(\min(KT,\binom{K}{m}))$ per round. Therefore, we also provide a highly efficient counterpart of the state-of-the-art.
\subsection{Contributions}
Our main contributions are as follows:
\begin{itemize}
\item As the first time in the 
literature, we introduce an online algorithm, i.e., \textit{Exp3.MSP}, that truly achieves (with
minimax optimal regret bounds) the performance of the best multiple-arm selection strategy. 
\item We achieve this performance with computational complexity only 
log-linear in the arm number, which is significantly smaller than the computational complexity of the state-of-the-art\cite{Gyorgy2007}.
\item In order to obtain the minimax optimal regret bound with \textit{Exp3.MSP}, we introduce an optimal expert mixture algorithm for the MAB-MP setting, i.e., \textit{Exp4.MP}. We derive a lower bound for the MAB-MP with expert advice setting and mathematically show the optimality of the \textit{Exp4.MP} algorithm.
\item  By using \textit{Exp4.MP}, we additionally improve the best-known high-probability bound for the multiple-play setting by $O(\sqrt{m})$, hence, close the gap between high-probability bounds\cite{Neu2016,Gyorgy2007} and the expected regret bounds\cite{Kale2010,Uchiya2010}.
\end{itemize}

\subsection{Organization of the Paper}
The organization of this paper is as follows: In Section \ref{sec:prob} we formally define
the adversarial multi-armed bandit problem with multiple plays. In Section \ref{sec:exp4m}, we introduce an optimal expert mixture algorithm for the MAB-MP setting. In Section \ref{sec:exp3ms}, by using our expert mixture algorithm , we construct an algorithm that competes with the best switching $m$-arm strategy in a computationally efficient way. In Section \ref{sec:experiments}, we demonstrate the performance of our algorithms via an extensive set of experiments. We conclude with final remarks in Section \ref{sec:conclusion}. 
\section{Problem Description}\label{sec:prob}
We use bracket notation $[n]$ to denote the set of the first $n$ positive integers, i.e., $[n]=\{1,\cdots,n\}$. We use $\textbf{C}([n],m)$ to denote the $m$-sized combinations of the set $[n]$. We use $[K]$ to denote the set of arms and $\textbf{C}([K],m)$ to denote the set of all possible $m$-arm selections.  We use $\textbf{1}_A$ to denote the column vector, whose $j^{th}$ component is $1$ if $j \in A$, and $0$ otherwise.

We study the MAB-MP problem, where we have $K$ arms, and randomly select an $m$-arm at each round $t$. Based on our $m$-arm selection $U(t) \in \textbf{C}([K],m)$, we observe only the gain of the selected arms, i.e., $x_i(t) \in [0,1]$ for $ i \in U(t)$, and receive their sum as the gain of our selection $U(t)$. We assume $x_i(t) \in [0,1]$ for notational simplicity; however, our derivations hold for any bounded gain after shifting and scaling in magnitude. We work in the adversarial bandit setting such that we do not assume any statistical model for the arm gains $x_i(t)$. The output $U(t)$ of our algorithm at each round $t$ is strictly online and randomized. It is a function of only the past selections and observed gains.

In a $T$ round game, we define the variable $\textbf{M}_T$, which represents a deterministic $m$-arm selection sequence of length $T$, i.e., $\textbf{M}_T(t) \in \textbf{C}([K],m)$ for $t=1,\cdots,T$. In the rest of the paper, we refer to each such deterministic $m$-arm selection sequence, $\textbf{M}_T$, as an $m$-arm strategy. The total gain of an $m$-arm strategy and the total gain of our algorithm (for this section, say the name of our algorithm is \textit{ALG}) are respectively defined as
\begin{equation*}
G_{\textbf{M}_T} \triangleq \sum_{t=1}^T \sum_{i \in \textbf{M}_T(t) } x_i(t), \textrm{ and } G_{ALG} \triangleq \sum_{t=1}^T \sum_{i \in U(t) } x_i(t).
\end{equation*}
Since we assume no statistical assumptions on the gain sequence, we define our performance with respect to the optimum strategy $\textbf{M}^*_T$, which is given as $\textbf{M}^*_T= \argmax_{\textbf{M}_T} G_{\textbf{M}_T}$. In order to measure the performance of our algorithm, we use the notion of \textit{regret} such that
\begin{equation*}
R(T)\triangleq  G_{\textbf{M}^*_T}- G_{ALG}.
\end{equation*} 
There are two different regret definitions for the randomized algorithms: the expected regret and high-probability regret. Since the algorithms that guarantee high-probability regret yield more reliable performance \cite{Auer1995,Beyg2011}, we provide high-probability regret with our algorithms. High-probability regret is defined as
\begin{equation*}
\textbf{Pr} \Big[ R(T) \geq \epsilon \Big] \leq \delta,
\end{equation*} 
which means that the total gain of our selections up to $T$ is not much smaller than the total gain of the best strategy $\textbf{M}^*_T$ with probability at least $1-\delta$. 

The regret $R(T)$ depends on how hard it is to learn the optimum $m$-arm strategy $\textbf{M}^*_T$. Since at every switch  we need to learn the optimal $m$-arm from scratch, we quantify the hardness of learning the optimum strategy by the number of segments it has. We define the number of segments as $S= 1+ \sum_{t=2}^T \1_{\mb(t-1) \not = \mb(t)}$. Our goal is to achieve that minimax optimal regret up to logarithmic factors with high probability, i.e.,
$$\textbf{Pr} \Big[ R(T) \geq \tilde{O}(\sqrt{mSKT}) \Big] \leq \delta.$$
\section{MAB-MP with Expert Advice} \label{sec:exp4m}
\begin{algorithm}[t!]
\algsetup{linenosize=\small}
\small
	\caption{Exp4.MP}\label{alg:algexp4m}
	\begin{algorithmic}[1]
		\STATE \textbf{Parameters:} $\eta, \gamma \in [0,1]$ and $c \in R^+$
		\STATE \textbf{Initialization:} $w_i(1) \in R^+$ for $i \in [N_r]$
		\FOR{$t=1$ \TO $T$}
		\STATE Get the actual advice vectors $\bm{\xi}^1(t),\cdots, \bm{\xi}^{N}(t)$ \label{alg1:act}
		\STATE Find the underlying experts $\bm{\zeta}^1(t),\cdots,\bm{\zeta}^{N_r}(t)$ \label{alg1:und}
		\STATE $v_j(t)= \sumNrL \frac{w_i(t)\zeta^i_j(t)}{\sum_{l=1}^{N_r} w_l(t) }$ for $j\in [K]$ \label{alg1:arm}
		\IF{$\argmax_{j \in [K]} v_j(t) \geq  \frac{(1/m) -(\gamma/K)}{(1-\gamma)}$} \label{alg1:cap_init}
		\STATE  {Decide $\alpha_t$ as
		$\frac{\alpha_t}{\sum\limits_{v_j(t) \geq \alpha_t} \alpha_t + \sum\limits_{v_j(t) < \alpha_t} v_j(t) } = \frac{(1/m) -(\gamma/K)}{(1-\gamma)}$} \label{alg1:alpha_t}
		\STATE Set $U_0(t)= \lbrace	 j: v_j(t) \geq \alpha_t \rbrace$ \label{alg1:u0}
		\STATE $v'_j(t)=\alpha_t$ for $j \in U_0(t)$ \label{alg1:u01}
		\ELSE
		\STATE Set $U_0(t) = \emptyset$
		\ENDIF 
		\STATE Set $v'_j(t) = v_j(t)$ for $j \in [K]- U_0(t)$ \label{alg1:cap_fin}
		\STATE $p_j(t)= m\Big((1-\gamma) \frac{v'_j(t)}{\sum\limits_{l=1}^{K} v'_l(t)} + \frac{\gamma}{K}\Big) $ for $j \in [K]$ \label{alg1:arm_prob} 
		\STATE Set $U(t)=$ DepRound$(m,(p_1(t),\cdots,p_K(t)))$ \label{alg1:depround}
		\STATE Observe and receive $x_j(t) \in [0,1]$ for each $j \in U(t)$ 
		\STATE $\hat{x}_j(t) = x_j(t)/p_j(t)$  for $j \in U(t)$ \label{alg1:arm_est1}
		\STATE $\hat{x}_j(t) = 0$   for $j \in [K] - U(t)$ \label{alg1:arm_est2}
		\FOR{$i=1$ \TO $N_r$}
		\STATE $\hat{y}_i(t)= \sumTh \zeta^i_j(t) \hat{x}_j(t)$ \label{alg1:und_est}
		\STATE $\hat{u}_i(t) = \sumTh \zeta^i_j(t)/p_j(t)$ \label{alg1:und_up}
		\STATE $w_i(t+1)=w_i(t) \exp\Big(\eta ( \hat{y}_i(t) + \frac{c }{\sqrt{KT}} \hat{u}_i(t))   \Big)$ \label{alg1:upd}
		\ENDFOR
		\ENDFOR
	\end{algorithmic}
\end{algorithm}
\begin{figure}[t!]
    \centering
     \includegraphics[width=0.35\textwidth]{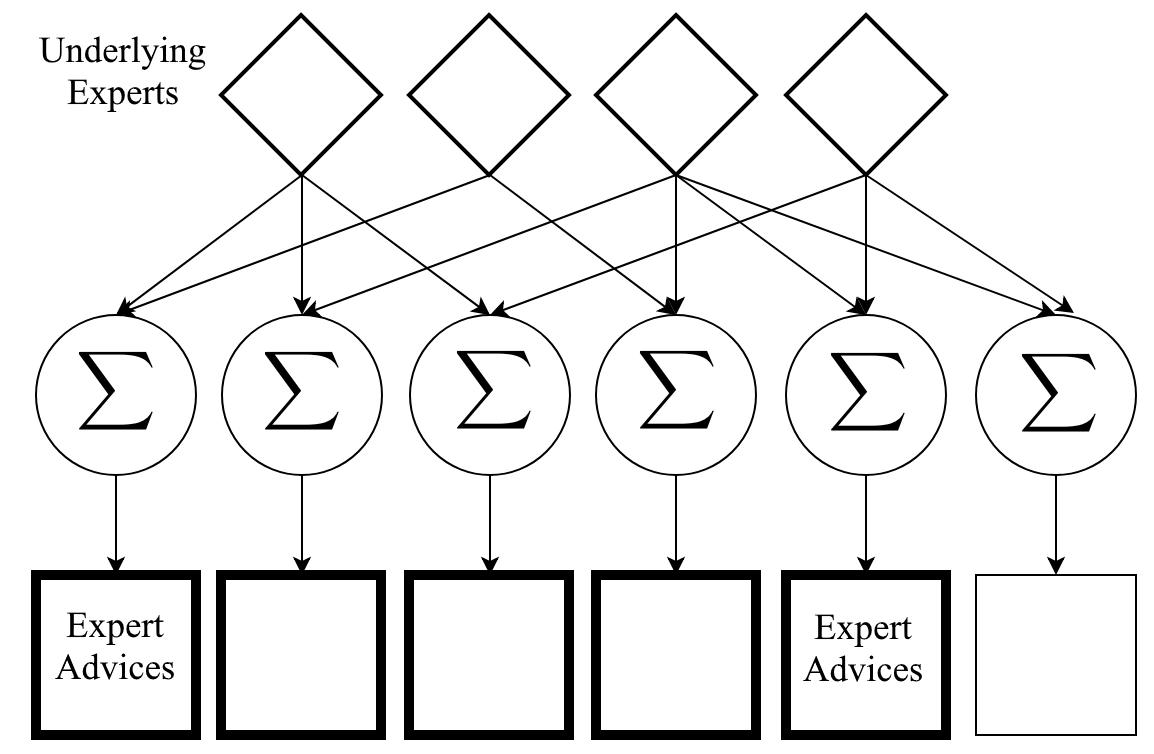}\\
     \caption{In \textit{Exp4.MP}, instead of directly using the expert set, we use 
	an underlying expert set, whose sum of $m$-combinations constitute a set containing the expert 
	advices. In the figure, the diamonds represent the underlying experts. The squares represent the sum of 
	$m$-	combinations. The bold squares are the expert advices presented to the algorithm. We note that 
    in this figure, $m=2$, $N_r=4$, $N=5$.}\label{fig:us}
\end{figure}
In this section, we consider selecting an $m$-arm with expert advice and introduce an optimal expert-mixture algorithm for the MAB-MP setting. We note that the primary aim of this section is to provide an optimal expert advice framework for the MAB-MP setting, on which we develop our optimal tracking algorithm in Section IV. By using our expert mixture algorithm, we additionally improve the best-known high-probability regret bound for the MAB-MP setting by $O(\sqrt{m})$ in the last remark of this section.

For this section, we define the phrase "expert advice" as the reference policies (or vectors) of the algorithm. The setting is as follows: At each round, each expert presents its $m$-arm selection advice as a $K$-dimensional vector, whose entries represent the marginal probabilities for the individual arms. The algorithm uses those vectors, along with the past performance of the experts, to choose an $m$-arm. The goal is to asymptotically achieve the performance of the best expert with high probability. For this setting, we introduce an optimal algorithm \textit{Exp4.MP}, which is shown in Algorithm \ref{alg:algexp4m}. In \textit{Exp4.MP}, instead of directly using the expert set, we use an \textit{underlying expert set} to utilize the possible structure of the expert set. An underlying expert set is defined as a non-negative vector set, whose sum of $m$-combinations constitute a set containing the expert advices (see Figure \ref{fig:us}). By using an underlying expert set, we replace the dependence of the regret on the size of the expert set $N$  with the size of the underlying expert set $N_r$, thus, obtain the minimax lower bound in the soft-Oh sense (proven in the following). In the rest of the paper, we use the term  \emph{underlying experts} to denote the elements of the underlying expert set, and the term \emph{actual experts} (respectively \textit{actual advice vectors}) to denote the experts (respectively expert advices) presented to the algorithm.

In \textit{Exp4.MP}, we first get the actual advice vectors $\bm{\xi}^k(t)$  for $k \in [N]$ in line \ref{alg1:act}. Since the entries of the actual advice vectors represent the marginal probabilities for the individual arms, they satisfy
\begin{align*}
\sum_{j=1}^K \xi^k_j(t)=m, \quad \max_{1 \leq j \leq K} \xi^k_j(t) \leq 1, \quad &\min_{1 \leq j \leq K}  \xi^k_j(t) \geq 0.
\end{align*}
Then we find the underlying experts, i.e., $\bm{\zeta}^i(t)$ for $i \in [N_r]$, in line \ref{alg1:und}. We note that for the algorithms presented in this paper, we derive the underlying expert sets a priori. Therefore, our algorithms do not explicitly compute the underlying experts at each round.

In the algorithm, we keep a weight for each underlying expert, i.e., $w_i(t)$ for $i \in [N_r]$. We use those weights as confidence measure to find the arm weights, i.e., $v_j(t)$, in line \ref{alg1:arm} as follows:
\begin{equation}
v_j(t)= \sumNrL \frac{w_i(t)\zeta^i_j(t)}{\sum_{l=1}^{N_r} w_l(t) } \textrm{ for }j\in [K].
\end{equation}

In order to select an $m$-arm, the expected total number of selection should be $m$, i.e., {\small$\sum_{j=1}^K p_j(t)= m$}. To satisfy this, we cap the arm weights so that the arm probabilities are kept in the range $[0,1]$. For the arm capping, we first check if there is an arm weight larger than {\small$\frac{(1/m) -(\gamma/K)}{(1-\gamma)}$} in line \ref{alg1:cap_init}. If there is, we find the threshold $\alpha_t$, and define the set $U_0(t)$ that includes the indices of the weights larger than $\alpha_t$, i.e., $U_0(t)= \lbrace	 j: v_j(t) \geq \alpha_t \rbrace$. We set the temporal weights of the arms in $U_0(t)$ to $\alpha_t$, i.e.,  $v'_j(t)= \alpha_t$ for $j \in U_0(t)$, and leave the other weights unchanged, i.e., $v'_j(t)= v_j(t)$ for $j \in [K]-U_0(t)$ (The implementation of this procedure is detailed in Appendix \ref{appa}). We then calculate the arm probabilities with the capped arm weights by
\begin{equation}
p_j(t)= m\Big((1-\gamma) \frac{v'_j(t)}{\sum\limits_{l=1}^{K} v'_l(t)} + \frac{\gamma}{K}\Big)\textrm{ for }j \in [K].
\end{equation}
In order to efficiently select  $m$  distinct arms with the marginal probabilities $p_j(t)$, we employ Dependent Rounding (DepRound) algorithm\cite{Gandhi2006} in line \ref{alg1:depround} (For the description of DepRound, see Appendix \ref{appa}). After selecting an $m$-arm, we observe the gain of each one of the selected $m$ arms and receive their sum as the reward of the round. 

To update the weights of the underlying experts, i.e., $w_i(t)$ for $i \in [N_r]$, we first find the estimated arm gains in lines \ref{alg1:arm_est1}-\ref{alg1:arm_est2}:
\begin{align}
\label{priorweight}
\hat{x}_j(t)=  \left\{ \arraycolsep=1.1pt\def\arraystretch{1.3}
\begin{array}{ll}
\frac{x_j(t)}{p_j(t)} \quad & \textrm{if} \: j \in U(t) \\
0 & \textrm{otherwise.} 
\end{array} \right.
\end{align}
Then by using the estimated arm gains $\hat{x}_j(t)$ for $j \in [K]- U_0(t)$, we calculate the estimated expected gain of the underlying experts by
\begin{equation}
\hat{y}_i(t)= \! \! \sumTh \! \zeta^i_j(t) \hat{x}_j(t)  \textrm{ for }i\in [N_r].
\end{equation}
In order to obtain high-probability bound, we use upper confidence bounds. However, we note that we cannot directly use the upper confidence bound of the single arm setting \cite{Beyg2011} since \textit{Exp4.MP} includes an additional non-linear weight capping in lines \ref{alg1:cap_init}-\ref{alg1:cap_fin}. In the following, we show that we can use a similar upper bounding technique by not including the capped arm weights $U_0(t)$, i.e.,
\begin{equation}
\hat{u}_i(t) = \sumTh  \frac{\zeta^i_j(t)}{p_j(t)}.
\end{equation}
Then by using $\hat{y}_i(t)$ and $\hat{u}_i(t)$, we update the weights $w_i(t)$ in line \ref{alg1:upd} by
\begin{equation}
w_i(t+1)=w_i(t) \exp\Big(\eta ( \hat{y}_i(t) +  \frac{ c }{\sqrt{KT}} \hat{u}_i(t) )  \Big),
\end{equation}
where $\eta$ is the learning rate and $c/\sqrt{KT}$ is the scaling factor, which determines the range of the confidence bound. 

For the following theorems, we respectively define the total gain of the underlying expert with index $i$, and its estimation as
\begin{equation}
G_i \triangleq \sum_{t=1}^T \bm{\zeta}^i(t) \cdot \bm{x}(t) \textrm{ and } \hat{G}_i\triangleq \sum_{t=1}^T \bm{\zeta}^i(t) \cdot \bm{\hat{x}}(t),
\end{equation}
where $\bm{x}(t)$ and $\bm{\hat{x}}(t)$ are the column vectors containing the real and the estimated arm gains, i.e., $\bm{x}(t)=[x_1(t),\cdots,x_K(t)]^T$ and $\bm{\hat{x}}(t)=[\hat{x}_1(t),\cdots,\hat{x}_K(t)]^T$. Let us define a set $A$ that includes $m$ arbitrary underlying experts, i.e., $A \in \textbf{C}([N_r],m)$. Then, by using the total gain of the underlying experts in the best $A$ (in terms of the total gain), the total gain of the best actual expert can be written as
\begin{equation}
G_{max} = \max_{A \in \textbf{C}([N_r],m)}  \sum_{i \in A} G_i.
\end{equation}
We also define  the upper bounded estimated gain of a set $A$, i.e $\hat{\Gamma}_A$, and the set with the maximum upper bounded estimated gain, i.e., $A^*$, as follows:
{\small
\begin{align}
\label{astar}
&\hat{\Gamma}_A \triangleq \sum_{i \in A} \hat{G}_i+ \frac{c}{\sqrt{KT}}  \sum_{t=1}^T  \sum_{i \in A}  \hat{u}_i(t) \textrm{ and }
A^*=\argmax_{A \in \textbf{C}([N_r],m)} \hat{\Gamma}_A.
\end{align}
}In the following theorem, we provide a useful inequality that relates $\hat{\Gamma}_{A^*}$, $G_{Exp4.MP}$ and the initial weights of the underlying experts in $A^*$, i.e., $w_i(1)$ for $i \in A^*$ under a certain assumption. This inequality will be used to derive regret bounds for our algorithms in Corollary \ref{probcor} and Theorem \ref{switchcor}, where we ensure that the assumption in Theorem \ref{the4} holds.
\begin{theorem}
\label{the4}
Let $W_1$  denote $\sum_{i=1}^{N_r} w_i(1)$. Assuming 
\begin{equation*}
\eta (\yh + \ub) \leq 1,  \quad \forall i \in [N_r] \textrm{ and } \forall t \in [T]
\end{equation*}
\textit{Exp4.MP} ensures that
\begin{align}
(1- \gamma - 2 \eta \frac{K}{m} ) \hat{\Gamma}_{A^*} + &\frac{(1-\gamma)}{\eta} \Big( \sum_{i \in A^*} \ln(w_i(1)) - m \ln \frac{W_1}{m} \Big) \!  \nonumber \\
&\leq \! G_{Exp4.MP} + c \sqrt{KT} +   \frac{ \eta  c^2 2 K }{\gamma m}  \label{regretformula}
\end{align}
holds for any $K,T > 0$.
\end{theorem}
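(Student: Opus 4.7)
My plan is the standard potential--function argument with $W_t = \sum_{i=1}^{N_r} w_i(t)$: I will sandwich $\ln(W_{T+1}/W_1)$ between a lower bound that exposes $\hat{\Gamma}_{A^*}$ and an upper bound that exposes $G_{Exp4.MP}$, then rearrange to obtain \eqref{regretformula}. For the lower bound, restricting $W_{T+1}$ to the $m$ indices in $A^*$ and applying AM--GM, $\sum_{i \in A^*} w_i(T+1) \geq m \bigl(\prod_{i \in A^*} w_i(T+1)\bigr)^{1/m}$, and then unwinding the multiplicative update yields $\ln(W_{T+1}/W_1) \geq \tfrac{1}{m}\sum_{i \in A^*}\ln w_i(1) - \ln(W_1/m) + \tfrac{\eta}{m}\hat{\Gamma}_{A^*}$; multiplication by $m(1-\gamma)/\eta$ then reproduces the LHS of \eqref{regretformula}.

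For the upper bound, the hypothesis $\eta(\hat{y}_i + \tfrac{c}{\sqrt{KT}}\hat{u}_i) \leq 1$ justifies $e^x \leq 1 + x + x^2$ inside each factor $W_{t+1}/W_t$, and $\ln(1+y) \leq y$ followed by summation separates a first--order and a quadratic piece. The first--order piece rests on the identity $\sum_i \tfrac{w_i(t)}{W_t}\hat{y}_i(t) = \sum_{j \in [K]-U_0(t)} v_j(t)\hat{x}_j(t)$. Since the capping sets $v'_j(t) = \alpha_t \leq v_j(t)$ on $U_0(t)$, the total $S_t \triangleq \sum_l v'_l(t) \leq \sum_l v_l(t) = 1$, so $v_j(t) = v'_j(t) \leq \tfrac{S_t\, p_j(t)}{m(1-\gamma)} \leq \tfrac{p_j(t)}{m(1-\gamma)}$ on $[K]-U_0(t)$; combined with $p_j(t)\hat{x}_j(t) = x_j(t)\1_{\{j \in U(t)\}}$ (and $p_j(t) = 1$ on $U_0(t)$, where DepRound samples deterministically), telescoping delivers $\sum_t\sum_i\tfrac{w_i(t)}{W_t}\hat{y}_i \leq G_{Exp4.MP}/(m(1-\gamma))$ and, by the same device, $\sum_t\sum_i\tfrac{w_i(t)}{W_t}\hat{u}_i \leq KT/(m(1-\gamma))$; after scaling by $m(1-\gamma)$ these contribute $G_{Exp4.MP} + c\sqrt{KT}$ to the RHS of \eqref{regretformula}.

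The decisive step is the quadratic piece. I would apply $(a+b)^2 \leq 2a^2 + 2b^2$ and Jensen's inequality (using $\zeta^i_j \geq 0$ and $\sum_j \zeta^i_j = 1$) to get $\hat{y}_i(t)^2 \leq \sum_j \zeta^i_j \hat{x}_j(t)^2$ and $\hat{u}_i(t)^2 \leq \sum_j \zeta^i_j / p_j(t)^2$ (restricted to $[K]-U_0(t)$). The $\hat{u}_i^2$ contribution, after the same capping bound and the exploration floor $p_j(t) \geq m\gamma/K$, accumulates to at most $\tfrac{c^2 K^2 T}{m^2\gamma(1-\gamma)}$, which scales exactly to $\tfrac{2\eta c^2 K}{m\gamma}$ on the RHS. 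The $\hat{y}_i^2$ contribution, using $p_j \hat{x}_j^2 \leq \hat{x}_j$ (from $x_j \in [0,1]$), reduces to $\tfrac{1}{m(1-\gamma)}\sum_t\sum_{j \in [K]-U_0(t)}\hat{x}_j(t)$; I would then bound this by $\tfrac{K}{m^2(1-\gamma)}\hat{\Gamma}_{A^*}$ by invoking the optimistic structure of $\hat{\Gamma}_{A^*}$: its UCB augmentation $\tfrac{c}{\sqrt{KT}}\hat{u}_i$ dominates each per-arm estimate, and the top-$m$ subset total is at least $(m/K)$ times the over-all-arms total, so $\sum_t\sum_j \hat{x}_j(t) \leq (K/m)\hat{\Gamma}_{A^*}$. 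This produces the $\tfrac{2\eta K}{m}\hat{\Gamma}_{A^*}$ term that is absorbed into the LHS coefficient $(1-\gamma - 2\eta K/m)$.

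The main obstacles are (i) the capping bookkeeping that keeps $S_t \leq 1$ and preserves the clean inequality $v_j \leq p_j/(m(1-\gamma))$ after the non-linear truncation in lines \ref{alg1:cap_init}--\ref{alg1:cap_fin}, and (ii) linking $\sum_t\sum_j \hat{x}_j(t)$ to $\hat{\Gamma}_{A^*}$, which is precisely where the optimistic bonus inside $\hat{\Gamma}_{A^*}$ does the work of an otherwise--missing variance control; once both are in hand, combining the two bounds on $\ln(W_{T+1}/W_1)$ and collecting terms finishes the proof for any $K,T>0$.
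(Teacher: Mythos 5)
Your proposal is essentially the paper's own proof: the same potential-function sandwich on $\ln(W_{T+1}/W_1)$, the same AM--GM lower bound over the $m$ indices of $A^*$, the same capping inequality $v_j(t)\le p_j(t)/(m(1-\gamma))$ on $[K]-U_0(t)$ (with $p_j(t)=1$ on $U_0(t)$), the same $(a+b)^2\le 2(a^2+b^2)$ plus Jensen treatment of the quadratic term, and the same closing step $\sum_t\sum_j \hat{x}_j(t)\le (K/m)\hat{\Gamma}_{A^*}$ that the paper also invokes (without further justification) to absorb the variance into the coefficient $1-\gamma-2\eta K/m$. The only cosmetic deviation is that you control the $\hat{u}_i(t)^2$ contribution via Jensen together with the exploration floor $p_j(t)\ge m\gamma/K$, whereas the paper uses $\hat{u}_i(t)\le K/(\gamma m)$ directly; both yield the identical $2\eta c^2 K/(\gamma m)$ term.
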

\begin{proof}See Appendix B.
\end{proof}
In the following corollary, we derive the regret bound of \textit{Exp4.MP} with uniform initialization.
\begin{corollary}
\label{probcor}
If Exp4.MP is initialized with $w_i(1)=1$ $\forall i \in [N_r]$, and run with the parameters
\begin{equation*}
\eta=\etav \quad \gamma=\gammavmp \quad  c=\cvmp ,
\end{equation*}
for any $\frac{m \ln (N_r/\delta)}{K (e-2)} \leq T$ and $\delta \in [0,1]$, it ensures that
\begin{align}
G_{max}-G_{Exp4.MP} &\leq 2 \sqrt{mKT \ln \frac{N_r}{\delta}} \nonumber \\
&+ 4 \sqrt{mKT \ln \frac{N_r}{m}} + m \ln \frac{N_r}{\delta} \label{regretexp4m2}
\end{align}
holds with probability at least $1-\delta$.
\end{corollary}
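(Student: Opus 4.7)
The plan is to apply Theorem \ref{the4} and augment it with a martingale concentration inequality that turns the estimated quantity $\hat{\Gamma}_{A^*}$ into the true $G_{max}$ with high probability. Before invoking Theorem \ref{the4} I would verify its boundedness hypothesis. Since each underlying expert satisfies $\sum_j \zeta^i_j(t) = 1$ and the exploration step enforces $p_j(t) \ge m\gamma/K$ on arms outside $U_0(t)$, one has $\hat{y}_i(t),\hat{u}_i(t) \le K/(m\gamma)$; multiplying by $\eta = m\gamma/(2K)$ and $\eta c/\sqrt{KT}$ respectively gives $\eta(\hat{y}_i(t) + c\hat{u}_i(t)/\sqrt{KT}) \le 1/2 + c/(2\sqrt{KT})$, which is $\le 1$ exactly under the assumption $m\ln(N_r/\delta) \lesssim KT$ stated in the corollary (the $(e-2)$ factor coming from the standard sharpened MGF bound used below).

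With uniform weights $W_1 = N_r$ and $\sum_{i\in A^*} \ln w_i(1) = 0$. The parameter identities $2\eta K/m = \gamma$ and $2\eta c^2 K/(\gamma m) = c^2$ reduce Theorem \ref{the4} to
\begin{equation*}
(1-2\gamma)\hat{\Gamma}_{A^*} - \frac{2K(1-\gamma)}{\gamma}\ln\frac{N_r}{m} \le G_{Exp4.MP} + c\sqrt{KT} + c^2.
\end{equation*}
The main obstacle is producing a matching high-probability lower bound on $\hat{\Gamma}_{A^*}$. For each underlying expert $i \in [N_r]$ I would follow the MGF technique of Auer et al.\ and Beygelzimer et al.: since $\hat{x}_j(t) \le 1/p_j(t)$, the exponentiated process $\exp\bigl(\lambda(G_i - \hat{G}_i) - \lambda^2 \sum_t \hat{u}_i(t)\bigr)$ has conditional expectation at most $1$ at the scale $\lambda = c/\sqrt{KT}$, because the confidence bonus $c\hat{u}_i(t)/\sqrt{KT}$ is chosen precisely to dominate the second-order term of the Taylor expansion. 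The one subtlety is the non-linear capping in lines \ref{alg1:cap_init}--\ref{alg1:cap_fin}: for any $j \in U_0(t)$ the scheme forces $p_j(t) = 1$, hence $\hat{x}_j(t) = x_j(t)$ deterministically, so arms in $U_0(t)$ vanish from both $G_i - \hat{G}_i$ and the variance proxy; this justifies excluding $U_0(t)$ from $\hat{y}_i(t)$ and $\hat{u}_i(t)$ and keeps the argument identical to the standard single-arm case. Markov's inequality plus a union bound over all $N_r$ underlying experts then yields, with probability at least $1-\delta$,
\begin{equation*}
\hat{\Gamma}_{A^*} \ge \hat{\Gamma}_{A_{max}} \ge G_{max} - \frac{m\sqrt{KT}}{c}\ln\frac{N_r}{\delta} = G_{max} - \sqrt{mKT\ln(N_r/\delta)},
\end{equation*}
where $A_{max}$ is the maximizer of $\sum_{i \in A} G_i$ and the last equality uses $c = \sqrt{m \ln(N_r/\delta)}$.

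The final step is bookkeeping. Plugging this lower bound into the simplified inequality and substituting $c\sqrt{KT} = \sqrt{mKT\ln(N_r/\delta)}$, $c^2 = m\ln(N_r/\delta)$, and $2K(1-\gamma)\ln(N_r/m)/\gamma = 2(1-\gamma)\sqrt{mKT\ln(N_r/m)}$ (from $\gamma = \sqrt{K\ln(N_r/m)/(mT)}$) produces
\begin{equation*}
G_{max} - G_{Exp4.MP} \le 2\gamma G_{max} + (2-2\gamma)\sqrt{mKT\ln(N_r/\delta)} + m\ln(N_r/\delta) + 2(1-\gamma)\sqrt{mKT\ln(N_r/m)}.
\end{equation*}
Bounding the residual $G_{max} \le mT$ converts $2\gamma G_{max}$ into an additional $2\sqrt{mKT\ln(N_r/m)}$ term, and collecting coefficients reproduces the claimed bound. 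The concentration step is the only delicate component, and the key enabling observation is that the deterministic selection of the capped arms makes the standard Exp3.P-style exponential-moment bound applicable without modification.
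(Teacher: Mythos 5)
Your proposal is correct and follows essentially the same route as the paper's proof: apply Theorem \ref{the4} with uniform initialization, establish $G_{max} \leq \hat{\Gamma}_{A^*} + c\sqrt{KT}$ with probability $1-\delta$ via a martingale concentration bound with a union bound over the $N_r$ underlying experts (the paper invokes this as Theorem 1 of Beygelzimer et al.\ rather than unpacking the exponential-moment argument as you do), and then substitute the parameter choices. Your handling of the capped arms ($p_j(t)=1$ on $U_0(t)$ so those terms cancel in $G_i - \hat{G}_i$) and the final bookkeeping, including bounding $2\gamma G_{max} \leq 2\sqrt{mKT\ln(N_r/m)}$, match the paper's steps.
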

\begin{proof}See Appendix B.
\end{proof}
In the next theorem, we show that in the MAB-MP with expert-advice setting, no strategy can enjoy smaller regret guarantee than {\small$O(\sqrt{mKT \ln N_r / \ln K})$} in the minimax sense. In its following, we also show that the derived lower bound is tight and it matches the regret bound of \textit{Exp4.MP} given in Corollary III.1.
\begin{theorem}
\label{lb}
Assume that $N_r=K^n$ for an integer $n$ and that $T$ is a multiple of $n$. Let us define the regret of an arbitrary forecasting strategy ALG in a game length of $T$ as
\begin{equation}
R_{ALG}(T)=G_{max} - G_{ALG}.
\end{equation}
Then there exists a distribution for gain assignments such that 
\begin{gather}
\inf_{\textrm{ALG}} \sup_{\bm{\xi}}R_{ALG}(T)  \geq O\Big(\sqrt{ \frac{mKT \ln N_r }{\ln K} }  \Big),
\end{gather} 
where  $\inf_{\textrm{ALG}}$ is an infimum over all possible forecasting strategies, $\sup_{\bm{\xi}}$ is a supremum over all possible expert advice sequences.
\end{theorem}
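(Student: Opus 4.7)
My plan is to adapt the classical Exp4 lower-bound argument of Auer \emph{et al.}\ to the multi-play setting by chaining $n=\log_K N_r$ independent copies of a hard MAB-MP instance, one per phase of length $T/n$. The $n$-fold chaining accounts for the $\sqrt{\ln N_r/\ln K}$ factor, while a per-phase MAB-MP minimax bound of $\Omega(\sqrt{mKT/n})$ accounts for the $\sqrt{mK}$ factor.

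First, I would partition $[T]$ into $n$ consecutive phases of length $T/n$ and, in each phase, present exactly $K$ distinct actual advice vectors: one natural choice is the cyclic family $\bm{\xi}^{(k)}=\textbf{1}_{S_k}$ for $k\in[K]$, where $S_k=\{k,k+1,\ldots,k+m-1\}\bmod K$. The underlying experts are then indexed by sequences $\bm{k}=(k_1,\ldots,k_n)\in[K]^n$, with expert $\bm{k}$ issuing advice $\bm{\xi}^{(k_i)}$ throughout phase $i$; this yields exactly $N_r=K^n$ underlying experts. For the environment I use a stochastic adversary: draw $k_i^{\star}\in[K]$ independently and uniformly at random, and in phase $i$ set each arm's gain $x_j(t)$ to be an independent Bernoulli of mean $\tfrac{1}{2}+\epsilon\,\1[j\in S_{k_i^{\star}}]$ with $\epsilon=\Theta(\sqrt{K/(mT/n)})$. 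Conditioned on $\bm{k}^{\star}$, the best expert is $\bm{k}^{\star}$ itself, so $G_{\max}$ equals the sum of the per-phase maxima.

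Inside each phase the learner faces a stochastic MAB-MP problem whose optimum $m$-arm is one of the $K$ cyclic shifts $\{S_k\}$; a divergence-decomposition argument, exploiting that every individual arm lies in exactly $m$ of the $K$ cyclic subsets so that the per-round KL divergence between any two candidate environments is of order $m\epsilon^2$, yields an expected per-phase regret lower bound of $\Omega(\sqrt{mKT/n})$ uniformly over learners. Because the phase-wise adversary constructions are independent, per-phase regrets add, and the averaged-adversary bound passes to the worst-case supremum via the standard Yao-type argument. Summing,
\[
\inf_{\textrm{ALG}}\sup_{\bm{\xi}}R_{ALG}(T) \;\geq\; n\cdot\Omega\!\bigl(\sqrt{mKT/n}\bigr) \;=\; \Omega\!\bigl(\sqrt{mKTn}\bigr) \;=\; \Omega\!\Bigl(\sqrt{\tfrac{mKT\ln N_r}{\ln K}}\Bigr),
\]
using $n=\ln N_r/\ln K$. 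The main technical obstacle is the per-phase lower bound itself, since only $K$ (rather than all $\binom{K}{m}$) candidate optima are allowed: the textbook $\Omega(\sqrt{mKT})$ proof does not apply verbatim, and one must extract the $\sqrt{m}$ factor directly from the cyclic structure by a careful Bretagnolle--Huber / chain-rule KL computation that tracks how much the $m$-wise overlap between $S_k$ and any played $m$-arm can reveal about $k_i^{\star}$.
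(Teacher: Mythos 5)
Your high-level skeleton is the same as the paper's: split $[T]$ into $n=\ln N_r/\ln K$ independent phases of length $T/n$, obtain a per-phase lower bound of $\Omega(\sqrt{mKT/n})$, and sum. The paper gets the per-phase bound for free by invoking the known vanilla MAB-MP minimax bound of Audibert et al.\ on each subinterval, which is possible because its underlying experts are all $K^n$ sequences of \emph{single arms}; the $m$-combinations of these underlying experts then realize every set in $\textbf{C}([K],m)$ in every phase, so the full hard instance for the vanilla problem is admissible and $G_{max}$ coincides with the per-phase optimal $m$-set gain.

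The gap in your version is the per-phase construction, and it is not merely a technical obstacle that a more careful KL computation will overcome: the cyclic family of only $K$ candidate optima provably cannot yield the $\sqrt{m}$ factor. Quantitatively, averaging over the $K$ environments, the total KL accumulated by any learner in a phase is of order $\epsilon^2\cdot\frac{m}{K}\sum_j \mathbb{E}_0[N_j]=\epsilon^2 m^2 T/(Kn)$, because each arm lies in $m$ of the $K$ cyclic sets and the learner makes $mT/n$ arm-pulls regardless of its strategy. Keeping this $O(1)$ forces $\epsilon\lesssim\frac{1}{m}\sqrt{Kn/T}$ (your stated $\epsilon=\Theta(\sqrt{Kn/(mT)})$ makes the KL of order $m$, so identification is easy and the argument collapses), and with the admissible $\epsilon$ the per-round regret of a confused learner is at most $m\epsilon\lesssim\sqrt{Kn/T}$, giving a per-phase bound of only $\Omega(\sqrt{KT/n})$ and an overall bound of $\Omega(\sqrt{KT\ln N_r/\ln K})$ --- the $\sqrt{m}$ is lost entirely. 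The factor-$m$ gain in per-round regret is exactly cancelled by the factor-$m^2$ gain in information, which is intrinsic to any family in which each hypothesis perturbs $m$ arms and there are only $K$ hypotheses. To recover $\sqrt{m}$ you need $m$ independent identification subproblems per phase (e.g.\ the standard partition of $[K]$ into $m$ blocks of size $K/m$ with one good arm per block, hence $(K/m)^m$ candidate optima), and correspondingly your underlying experts must recommend single arms rather than whole $m$-sets --- which also fixes a second, definitional mismatch in your construction: with $m$-set-valued underlying experts, $G_{max}=\max_{A\in\textbf{C}([N_r],m)}\sum_{i\in A}G_i$ sums $m$ distinct $m$-set strategies and no longer equals the quantity you lower-bound.
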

\begin{proof}
The presented proof is a modification of \cite[Theorem 1]{lower_bound} for the MAB-MP with expert advice setting. To derive a lower bound for the MAB-MP with expert advice setting, we split the interval $\{1 , \cdots, T \}$ into $n$ non-overlapping subintervals of length $T/n$, where each subinterval is assumed independent and indexed by $k \in \{ 1 , \cdots, n \}$. For each subinterval, we design a MAB-MP game, where the optimal policy is some different $A_k \in \textbf{C}([K],m)$. We also design $N_r = K^n$ sequences of underlying expert advice, such that for every possible every possible sequence of arms $j_1, \cdots, j_n \in \{1, \cdots, K\}^n$, there is an underlying expert that recommends the arms from the sequence throughout the corresponding subintervals. By using the lower bound $O(\sqrt{mKT})$ for the vanilla MAB-MP setting \cite{Audibert2012}, for each subinterval $k$, we have
\begin{gather*}
\inf_{\textrm{ALG}}  R^k_{ALG}(T/n)  \geq O\Big(\sqrt{ \frac{mKT}{n} }  \Big).
\end{gather*} 
where $R^k_{ALG}(T/n)$ is the regret bound corresponding to the
subinterval $k$. By summing all the regret components in each subinterval, and noting $R_{ALG}(T) \geq \sum_{k=1}^m R^k_{ALG}(T/n)$ and $n=\ln N_r/\ln K$, we obtain
\begin{gather*}
\inf_{\textrm{ALG}} \sup_{\bm{\xi}}R_{ALG}(T)  \geq O\Big(\sqrt{ \frac{mKT \ln N_r }{\ln K} }  \Big).
\end{gather*} 
\end{proof}
We note that for $N_r = K$, MAB-MP with expert advice can be reduced to the vanilla MAB-MP setting (by considering underlying experts as arms) and in this case our regret lower bound matches the lower and upper
bounds for the MAB-MP shown in \cite{Audibert2012}. Therefore, we maintain that our lower bound is tight. Furthermore, we note that the regret bound of \textit{Exp4.MP} in Corollary III.1  matches the presented lower bound with an additional $\ln K$ term, while the state-of-art \cite{Kale2010} provides a suboptimal regret bound $O(\sqrt{m KT \ln N})$ (notably for $N_r <<N$ as in the vanilla MAB-MP and the tracking the best $m$-arm settings). Therefore, we state that \textit{Exp4.MP} is an optimal algorithm and it is required to obtain the improvements presented in this paper.

In the following remark, we improve the best-known high-probability bound\cite{Neu2016} for the vanilla $K$-arm multi-play setting by $O(\sqrt{m})$. We note that the resulting bound matches with the minimax lower bound in the soft-Oh sense. Therefore, it cannot be improved in the practical sense.
\begin{remark}
\label{genmabmp}
If we use constant and deterministic actual advice vectors in Exp4.MP, i.e., $\bm{\xi}^k(t)= \textbf{1}_A \in \mathbbm{R}^K$ where $A \in \textbf{C}([K],m)$, the algorithm becomes a vanilla K-armed MAB-MP algorithm. We note that in this scenario, we can directly operate with $\bm{\zeta} ^i(t)=\textbf{1}_i \in \mathbbm{R}^K$, where $N_r=K$. By Corollary \ref{probcor}, if we use {\small$\gamma= \sqrt{\frac{K \ln(K/m)}{mT}}$} and $c=\sqrt{m \ln(K/\delta)}$, Exp4.MP guarantees the regret bound $O(\sqrt{mKT\ln(K/\delta)})$ with probability at least $1- \delta$. Since the most expensive operation of this scenario is arm capping, our algorithm achieves this performance with $O(K \log K)$ time and $O(K)$ space.
\end{remark}

\section{Competing Against the Switching Strategies}
\label{sec:exp3ms}
\begin{algorithm}[t!]
\algsetup{linenosize=\small}
\small
	\caption{Exp3.MSP}\label{alg:algexp3ms}
	\begin{algorithmic}[1]
		\STATE \textbf{Parameters:} $\eta, \gamma, \beta \in [0,1]$ and $c \in R^+$
		\STATE \textbf{Init:} $v_1(j)=1/K$ for $j \in [K]$ 
		\FOR{$t=1$ \TO $T$}
		\IF{$\argmax_{j \in [K]} v_j(t) \geq  \frac{(1/m) - (\gamma/K)}{(1-\gamma)}$} \label{alg2:cap_init}
		\STATE  {Decide $\alpha_t$ as
		$\frac{\alpha_t}{\sum\limits_{v_j(t) \geq \alpha_t} \alpha_t + \sum\limits_{v_j(t) < \alpha_t} v_j(t) } = \frac{(1/m) - (\gamma/K)}{(1-\gamma)}$}
		\STATE Set $U_0(t)= \lbrace	 j: v_j(t) \geq \alpha_t \rbrace$
		\STATE $v'_j(t)=\alpha_t$ for $j \in U_0(t)$
		\ELSE
		\STATE Set $U_0(t) = \emptyset$
		\ENDIF 
		\STATE Set $v'_j(t) = v_j(t)$ for $j \in [K]- U_0(t)$  \label{alg2:cap_fin}
		\STATE $p_j(t)= m\Big((1-\gamma) \frac{v'_j(t)}{\sum\limits_{l=1}^{K} v'_l(t)} + \frac{\gamma}{K}\Big) $ for $j \in [K]$
		\STATE Set $U(t)=$ DepRound$(m,(p_1(t),\cdots,p_K(t)))$ 
		\STATE Observe and receive rewards $x_j(t) \in [0,1]$ for each $j \in U(t)$
	    \STATE $\hat{x}_j(t) = x_j(t)/p_j(t)$  for $j \in U(t)$
		\STATE $\hat{x}_j(t) = 0$   for $j \in [K] - U(t)$
		\FOR{$j=1$ \TO $K$}
		\IF{$j \in [K] - U_0(t)$}
		\STATE $\tilde{v}_j(t)=v_j(t)\exp \Big(\eta (\hat{x}_j(t)+ \frac{c}{p_j(t)\sqrt{KT}}) \Big)$
		\ELSE
		\STATE  $\tilde{v}_j(t)=v_j(t)$
		\ENDIF
		\ENDFOR	
		\STATE $v_j(t+1)= \frac{(1-\beta) \tilde{v}_j(t) + \frac{\beta}{K-1} \sum_{i \not = j} \tilde{v}_i(t)}{\sum_{l=1}^K \tilde{v}_l(t)}$ for $j\in [K]$
		\ENDFOR
	\end{algorithmic}
\end{algorithm}

In this section, we consider competing against the switching $m$-arm strategies. We present \textit{Exp3.MSP}, shown in Algorithm \ref{alg:algexp3ms}, which guarantees to achieve the performance of the best switching $m$-arm strategy with  the minimax optimal regret bound.

We construct \textit{Exp3.MSP} algorithm by using \textit{Exp4.MP} algorithm. For this, we first consider a hypothetical scenario, where we mix each possible $m$-arm selection strategy as an actual expert in \textit{Exp4.MP}. We point out that the  actual advice vectors will be a repeated permutation of the vectors $\{ \bm{1}_A \in R^K: A \in \textbf{C}([K],m)\}$  at each round, which we can write as the sum of $m$ sized subsets of the set $\{ \textbf{1}_i \in R^K: i \in [K]\}$. Therefore, in this hypothetical scenario, we can directly combine all possible  single arm sequences as the underlying experts, where $N_r=K^T$. However, since the regret bound of \textit{Exp4.MP} is $O(\sqrt{mKT\ln(N_r/\delta)})$, a straightforward combination of $K^T$ underlying experts produces a non-vanishing regret bound $O(T)$. To overcome this problem, we will assign a different prior weight  for each one of $K^T$ strategies based on its complexity cost, i.e., the number of segments $S$ (more detail will be given later on).

Let $\textbf{s}_t$ be a sequence of single arm selections, $\textbf{s}_t=\{s_1,s_2,\cdots,s_t\}$ where $\st(t)= s_t \in [K]$, and $\wst$ be its corresponding weight. For ease of notation, we define
\begin{equation}
\nst=\Big(\xhst + \frac{c}{p_{\st(t)}\sqrt{KT}} \Big) \1_{\st(t) \not \in U_0(t)}
\end{equation}
where $p_{\st(t)}$ is the probability of choosing $\st(t)$ at round $t$, and 
\begin{equation}
\Nstm=\sum_{\tau=1}^{t-1} \nstau
\end{equation}
where $\st(i:j)$ denotes the $i^{th}$ through $j^{th}$ elements of the sequence $\st$. Then, the weight of the sequence $\textbf{s}_t$ is given by
\begin{equation}
\label{realprior}
\wst= \pst \exp(\eta \Nstm ),
\end{equation}
where $\pst$ is the prior weight assigned to the sequence $\textbf{s}_t$. 

We point out that using non-uniform prior weights, i.e., $\pst$, is required to have a vanishing regret bound since the number of single arm sequences grows exponentially with $T$. As noted earlier, by Corollary \ref{probcor}, the regret of \textit{Exp4.MP} with uniform initialization is dependent on the logarithm of $N_r$. Therefore, combining $K^T$ strategies with uniform initialization results in a linear regret bound, which is undesirable (the average regret does not diminish).  In order to overcome this problem, similar to complexity penalty of AIC\cite{AIC} and MDL\cite{MDL}, we assign different prior weights $\pi_{\textbf{s}_t}$ for each strategy $\textbf{s}_t$ based on its  the number of segments $S$ . To get a truly online algorithm, we use a
sequentially calculable prior assignment scheme that only depends on the last arm selection such that 
\begin{align}
\label{priorweight}
\psst=  \left\{ \arraycolsep=1.1pt\def\arraystretch{1.3}
\begin{array}{ll}
\frac{1}{K} \quad & \textrm{if} \: t=1 \\
 1-\beta  & \textrm{if} \: \textbf{s}_t(t)= \textbf{s}_t(t-1) \: \: \textrm{(no switch)} \\
\frac{\beta}{K-1} & \textrm{if} \: \textbf{s}_t(t) \neq \textbf{s}_t(t-1) \: \: \textrm{(switch)}.
\end{array} \right.
\end{align}
With the assignment scheme in (\ref{priorweight}), the prior weights are sequentially calculable as $$\pst=\psst \pi_{\st(1:t-1)}$$ and the weights of the arm selection strategies are given by 
\begin{equation}
\label{sweight}
\wst=\psst w_{\textbf{s}_t(1:t-1)} \exp( \eta \nstm ).
\end{equation}\par
In the following theorem, we show that \textit{Exp4.MP} algorithm running with $K^T$ underlying experts with the prior weighting scheme given in (\ref{priorweight}) and (\ref{sweight}) guarantees the  minimax optimal regret bound up to logarithmic factors with probability at least  $1- \betave \delta$. 
\begin{theorem}
\label{switchcor}
If Exp4.MP uses the prior weighting scheme given in (\ref{priorweight}) and (\ref{sweight}),  and the parameters
{\small \begin{alignat*}{2}
&\eta=\etav &&\beta=\betav \\
&\gamma=\gammavms \quad \quad   &&c=\cvms  
\end{alignat*}}to combine all possible single arms sequences as the underlying experts, for any $\frac{m \Sw}{(e-2)K} \lnsd \leq T$ and $\delta \in [0,1]$
{\small \begin{align}
G_{\textbf{M}^*_T}-G_{Exp4.MP} &\leq 6 \sqrt{m \Sw K T \lnsd} \nonumber \\
&+ m \Sw \lnsd
\end{align}}
holds with probability at least $1- \betave \delta$.
\end{theorem}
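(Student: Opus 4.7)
The plan is to apply Theorem~\ref{the4} to \textit{Exp4.MP} instantiated with $N_r = K^T$ underlying experts, one per single-arm sequence $\sT = (s_1,\ldots,s_T) \in [K]^T$, with initial weight $w_{\sT}(1) = \pi_{\sT}$ from (\ref{priorweight})--(\ref{sweight}). Because the transition probabilities sum to one at every step, the priors form a probability distribution on $[K]^T$, hence $W_1 = \sum_{\sT}\pi_{\sT} = 1$ and the bracket in (\ref{regretformula}) reduces to $\sum_{i \in A^*}\ln\pi_{\sT^{(i)}} + m\ln m$. The crux of the proof is the two-step chain
\[
G_{\mb} \;\le\; \hat{\Gamma}_{A_{\mathrm{comp}}} \;\le\; \hat{\Gamma}_{A^*},
\]
where $A_{\mathrm{comp}} = \{\sT^{(1)},\ldots,\sT^{(m)}\}$ is a decomposition of $\mb$ into $m$ single-arm sequences whose union at each time $t$ reproduces $\mb(t)$: the first inequality follows from a prior-weighted martingale concentration, and the second is by the definition of $A^*$ as the $\argmax$.

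First I would verify the hypothesis $\eta(\hat{y}_i(t) + c\hat{u}_i(t)/\sqrt{KT}) \leq 1$ of Theorem~\ref{the4}. The arm-capping step of \textit{Exp4.MP} enforces $p_j(t) \geq m\gamma/K$ for $j \notin U_0(t)$; since each underlying expert is a single arm, both $\hat{y}_i(t)$ and $\hat{u}_i(t)$ are at most $K/(m\gamma)$, and combining with $\eta = m\gamma/(2K)$ and the range condition $T \geq mS\lnsd/((e-2)K)$ yields the required bound. Next, I would choose the decomposition so that slot $k$ changes its arm only when the outgoing arm actually leaves the $m$-set at an $m$-arm switch; then each $\sT^{(k)}$ has at most $S_k \leq S$ segments. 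From $\pi_{\sT^{(k)}} = \tfrac{1}{K}(1-\beta)^{T-S_k}(\beta/(K-1))^{S_k-1}$ with $\beta = (S-1)/(T-1)$ and the inequality $-\ln(1-\beta) \leq \beta/(1-\beta)$, each $-\ln\pi_{\sT^{(k)}} \leq S\lns$, so $-\sum_{k=1}^{m}\ln\pi_{\sT^{(k)}} \leq mS\lns$.

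The main obstacle is the high-probability step: we must show simultaneously for every $\sT \in [K]^T$ that $\sum_t x_{\sT(t)}(t) \leq \hat{G}_{\sT} + (c/\sqrt{KT})\sum_t \hat{u}_{\sT}(t)$ with failure probability only $\tfrac{S-1}{e(T-1)}\delta$. A naive union bound over $K^T$ sequences would inflate the confidence width by $\sqrt{T\ln K}$; the fix is a \emph{prior-weighted} Freedman/Bernstein-type tail in which the deviation threshold for $\sT$ scales like $\sqrt{-\ln\pi_{\sT} + \ln(1/\delta)}$, so that the exceptional probability for $\sT$ is at most $\pi_{\sT}\cdot\delta$ and summing over all sequences telescopes to $\delta$. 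Restricting attention to sequences with at most $S$ segments further contracts the total failure to $\tfrac{(S-1)\delta}{e(T-1)}$ via the bound $\sum_{\sT\,:\,\#\text{segments}\leq S}\pi_{\sT} \leq (S-1)/(e(T-1))$ that follows from $\beta = (S-1)/(T-1)$. On this concentration event, the ``random'' quantity $-\sum_{i \in A^*}\ln\pi_{\sT^{(i)}}$ appearing in (\ref{regretformula}) is uniformly bounded by $mS\lnsd$, eliminating the dependence on the particular argmax.

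Finally, substituting into (\ref{regretformula}) and rearranging under the stated parameter choices, the term $c\sqrt{KT} = \sqrt{mSKT\lnsd}$ gives the leading contribution, $2\eta c^2 K/(\gamma m) = mS\lnsd$ is the additive term, $(1-\gamma)/\eta$ times $mS\lns$ contributes another $O(\sqrt{mSKT\lnsd})$ after using $\eta = m\gamma/(2K)$ and $\gamma^2 mT = K\lns$, and the slack $(\gamma + 2\eta K/m)G_{\mb} \leq 2\gamma mT = 2\sqrt{mKT\lns}$ is absorbed into the leading $\sqrt{mSKT\lnsd}$. Collecting everything, together with $G_{\mb} \leq \hat{\Gamma}_{A^*}$ on the concentration event, produces the stated bound $R(T) \leq 6\sqrt{mSKT\lnsd} + mS\lnsd$.
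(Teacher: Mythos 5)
Your overall skeleton is the same as the paper's: decompose $\mb$ into $m$ single-arm sequences with common switching instants, note $W_1=1$, apply Theorem \ref{the4}, and control the estimation error by a concentration bound whose per-sequence failure probability is proportional to the prior $\pi_{\sT}$. However, the mechanism you give for the confidence level $1-\frac{S-1}{e(T-1)}\delta$ rests on the inequality $\sum_{\sT:\,\#\mathrm{segments}\le S}\pi_{\sT}\le\frac{S-1}{e(T-1)}$, which is false: with $\beta=\frac{S-1}{T-1}$ the prior puts mean segment count exactly $S$, so the prior mass on sequences with at most $S$ segments is of order one, not $O(S/T)$. The paper instead sets the per-sequence failure probability to $\Delta=\delta'\pi_{\sT}$ with $\delta'=\frac{S-1}{e(T-1)}\delta$; the union bound then gives total failure $\delta'\sum_{\sT}\pi_{\sT}\le\delta'$, and Lemma \ref{l1} is invoked precisely so that the enlarged log-confidence term $\ln\frac{1}{\delta'\pi_{\sT}}$ is still at most $S\lnsd$. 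In other words, the small factor multiplying $\delta$ is something one may choose because it is absorbed into the $S\lnsd$ width, not something implied by the prior mass of short sequences.

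A second gap: you invoke (\ref{regretformula}) with the argmax set $A^*$ and then assert that $-\sum_{i\in A^*}\ln w_i(1)\le m S\lnsd$ ``on the concentration event.'' That is not justified: $A^*$ maximizes $\hat{\Gamma}_A$ only, so its members may be sequences with up to $T$ segments and priors as small as $\frac{1}{K}(\frac{\beta}{K-1})^{T-1}$, making the prior term of order $T\ln(KT/S)$ and the bound vacuous. The paper avoids this by evaluating the prior term at the explicit decomposition $\{\textbf{s}^1_T,\dots,\textbf{s}^m_T\}\in\mb$, whose priors are exactly $\frac{1}{K}\big(\frac{\beta}{K-1}\big)^{S-1}(1-\beta)^{T-S}$, and using $\hat{\Gamma}_{\mb}\le\hat{\Gamma}_{A^*}$ only for the gain term; equivalently, the argument in the proof of Theorem \ref{the4} goes through for any fixed $m$-subset $A$ of underlying experts, and it is that version which must be applied to your $A_{\mathrm{comp}}$. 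The remaining ingredients of your proposal (verification of $\eta(\hat{y}_i(t)+c\hat{u}_i(t)/\sqrt{KT})\le 1$, the choice of a decomposition with at most $S$ segments per slot, and the final collection of terms under the stated parameters) match the paper's proof.
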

\begin{proof}
See Appendix C.
\end{proof}
Although we achieved minimax performance, we still suffer from exponential time and space requirements. In the following theorem, we show that by keeping $K$ weights and
updating the weights as
\begin{equation}
\label{shareupdate}
v_j(t+1)= \frac{(1-\beta) \tilde{v}_j(t) + \frac{\beta}{K-1} \sum_{i \not = j} \tilde{v}_i(t)}{\sum_{l=1}^K \tilde{v}_l(t)},
\end{equation}
where
{\small 
\begin{align}
\label{sharevalues}
\tilde{v}_j(t)=  \left\{ \begin{array}{ll}
v_j(t)\exp \Big(\eta (\hat{x}_j(t)+ \frac{c}{p_j(t)\sqrt{KT}})\Big)  & \textrm{if} \: j \in [K]-U_0(t)  \\
v_j(t)  & \textrm{otherwise},
\end{array} \right.
\end{align}
}we can efficiently compute the same weights in the hypothetical \textit{Exp4.MP} run with a computational complexity linear in $K$. To show this, we extend \cite[Theorem 5.1]{CBianchi2006} for the MAB-MP problem:
\begin{theorem}
\label{theoem51ext}
For any $\beta, \gamma, \eta \in [0,1]$, and for any $c, T>0$, \textit{Exp4.MP} algorithm that mixes all possible single arm sequences as underlying experts with the weighting scheme given
in (\ref{priorweight}) and (\ref{sweight}) has equal arm weights with \textit{Exp3.MSP} algorithm, which
updates its weights according to formulas in (\ref{shareupdate}) and (\ref{sharevalues}).
\end{theorem}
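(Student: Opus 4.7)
The plan is to prove the equivalence by induction on $t$, matching the Exp3.MSP weights against the per-arm marginals of the distribution over single-arm sequences implicitly maintained by the hypothetical Exp4.MP run. Concretely, interpret the arm weight computed in line~\ref{alg1:arm} of Exp4.MP as $v_j(t) = \frac{1}{W_t}\sum_{\st:\,\st(t)=j} \wst$, where $W_t = \sum_{\st} \wst$ and the sum ranges over all $K^t$ single-arm sequences of length $t$. The inductive hypothesis is that this coincides with the $v_j(t)$ maintained by Exp3.MSP. Once this is established, both algorithms feed identical weights into the capping and mixing step (lines~\ref{alg2:cap_init}--\ref{alg2:cap_fin} of Exp3.MSP), so they produce identical $U_0(t)$, $\alpha_t$, $p_j(t)$, and hence identical $\hat x_j(t)$ and update exponents $\hat n_l(t)$, which is exactly what is needed to carry the induction forward.

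For the base case $t=1$, the prior weighting scheme~(\ref{priorweight}) sets $\pi_{\textbf{s}_1} = 1/K$ for each singleton sequence, so $w_{\textbf{s}_1} = 1/K$, $W_1 = 1$, and $v_j(1) = 1/K$, matching the Exp3.MSP initialization. For the inductive step, partition length-$(t+1)$ sequences $\textbf{s}_{t+1}$ by their $t$-th entry $l = \textbf{s}_{t+1}(t)$ and apply the recursion~(\ref{sweight}) at time $t+1$. Since the factor $\exp(\eta\,\hat n_l(t))$ depends only on $l$, it pulls out of the inner sum and gives $\sum_{\textbf{s}_{t+1}(t+1)=j} w_{\textbf{s}_{t+1}} = W_t \sum_{l=1}^K \pi(j\mid l)\,\exp(\eta\,\hat n_l(t))\,v_l(t)$. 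The two cases of~(\ref{priorweight}) then collapse this into $W_t\bigl[(1-\beta)\tilde v_j(t) + \tfrac{\beta}{K-1}\sum_{l\neq j}\tilde v_l(t)\bigr]$, with $\tilde v_l(t) = v_l(t)\exp(\eta\,\hat n_l(t))$ matching~(\ref{sharevalues}) piecewise. Summing this identity over $j$ and using that each row $\pi(\cdot\mid l)$ of the transition is a probability vector yields $W_{t+1}/W_t = \sum_l \tilde v_l(t)$, so dividing the per-arm numerator by this normalizer reproduces exactly the Exp3.MSP update~(\ref{shareupdate}).

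The only subtlety I foresee is correctly bookkeeping the capping set $U_0(t)$: the definition $\hat n_l(t) = (\hat x_l(t) + c/(p_l(t)\sqrt{KT}))\,\1_{l\notin U_0(t)}$ zeros the update exponent on capped arms, and Exp3.MSP's piecewise $\tilde v_l(t)$ in~(\ref{sharevalues}) is defined with the same convention; one must verify explicitly that the $U_0(t)$ determined from $v_l(t)$ is the same object in both algorithms, which holds by the inductive hypothesis. Apart from this bookkeeping, the argument is the direct multi-play analogue of \cite[Theorem~5.1]{CBianchi2006}: the DepRound sampling that converts the marginals $p_j(t)$ into the $m$-arm selection $U(t)$ plays no role in the weight-sharing identity, because each underlying expert is a single-arm sequence whose per-round exponent references only one arm, regardless of how many arms are ultimately drawn. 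The same factorization that works for $m=1$ goes through verbatim, with the only change being that the common $\hat x_j(t)$ is now an $m$-play importance-weighted estimate rather than a single-play one.
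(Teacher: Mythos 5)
Your proposal is correct and follows essentially the same route as the paper's proof: an induction on $t$ identifying the Exp3.MSP arm weights with the normalized marginals $\frac{1}{W_t}\sum_{\st:\,\st(t)=j}\wst$ of the hypothetical Exp4.MP run, with the last transition factor and the exponent $\exp(\eta\,\hat{n}_l(t))$ pulled out of the sum to recover the weight-sharing update. Your version is, if anything, slightly more explicit than the paper's (you verify the normalizer $W_{t+1}/W_t=\sum_l\tilde{v}_l(t)$ and the propagation of $U_0(t)$ through the induction, which the paper leaves implicit in its proportionality assumption), but there is no substantive difference in approach.
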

\begin{proof}
See Appendix C.
\end{proof}
Theorem \ref{theoem51ext} proves that for any parameter selection \textit{Exp3.MSP} is equivalent algorithm to the hypothetical \textit{Exp4.MP} run. Therefore, Theorem \ref{switchcor} is valid for \textit{Exp3.MSP}, which shows that \textit{Exp3.MSP} has a $\tilde{O}(\sqrt{mSKT})$ regret bound holding with at probability at least $1-\frac{S-1}{e(T-1)}$ with respect to the optimal $m$-arm strategy. Since the most expensive operation in the algorithm is capping, \textit{Exp3.MSP} requires $O(K \log K)$ time complexity per round.

\section{Experiments}\label{sec:experiments}
\begin{figure*}[t!]
    \centering
    \begin{tabular}{cc}
    \begin{subfigure}[b]{0.5\textwidth}
        \centering
        \includegraphics[width=0.65\textwidth]{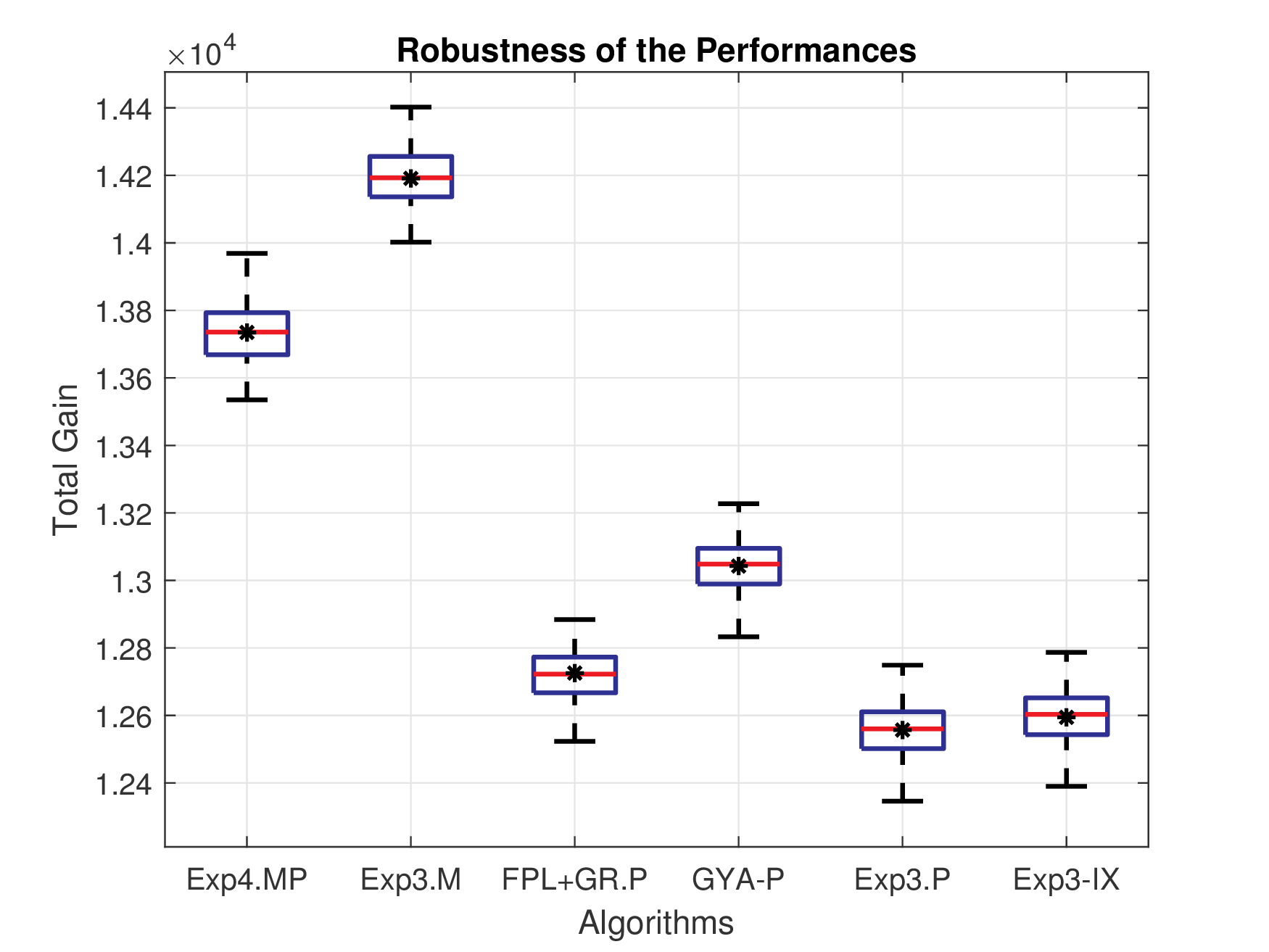}\\
        \caption{}\label{fig:up_to_T2}
    \end{subfigure}
    \begin{subfigure}[b]{0.5\textwidth}
        \centering
        \includegraphics[width=0.65\textwidth]{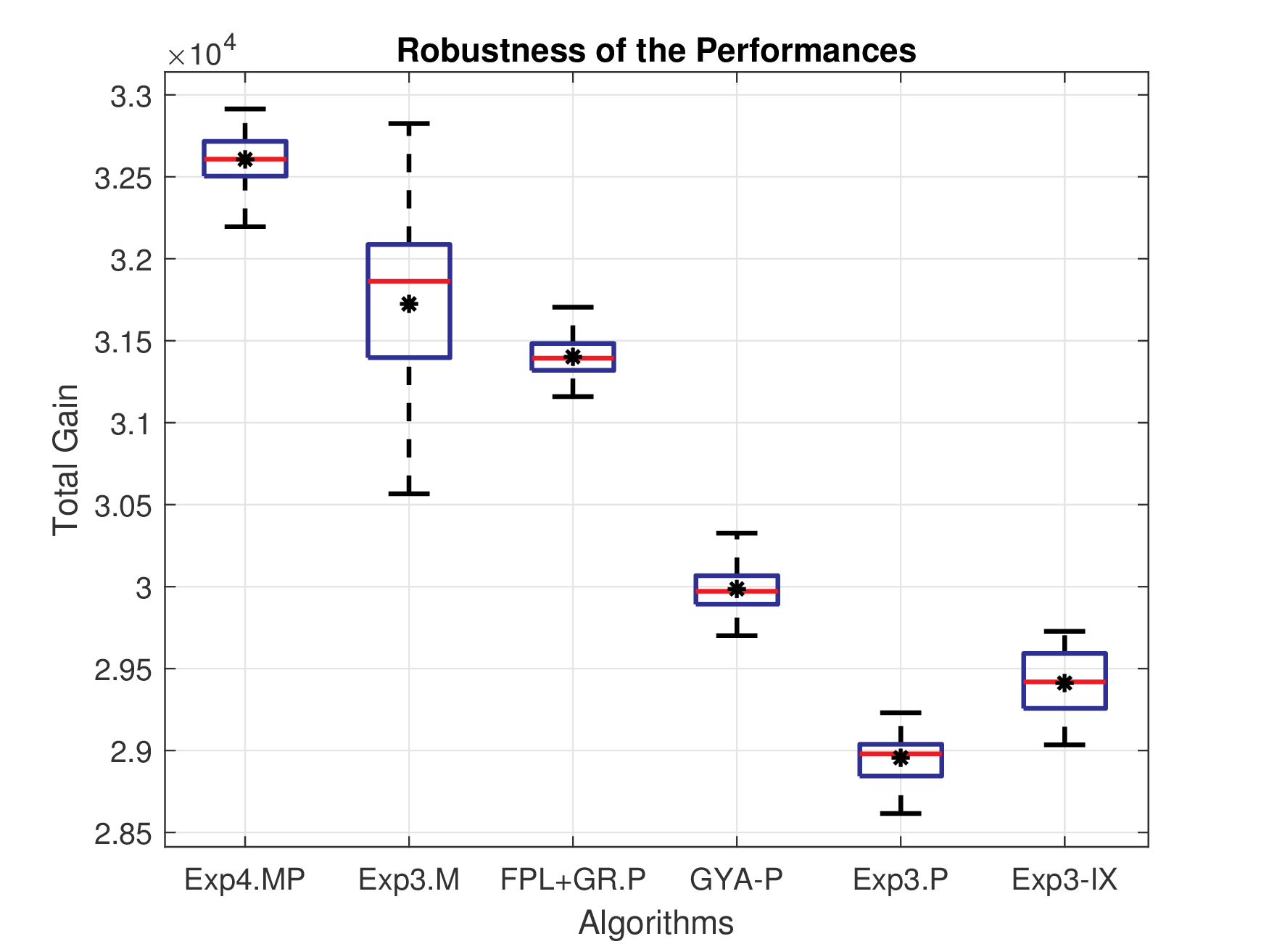}\\
        \caption{}\label{fig:up_to_T}
    \end{subfigure}
    \end{tabular}
    \caption{Comparison of the distributions of the total gains received by the algorithms (a) up to $T/2$ (b) up to $T$.}\label{fig:variance}
\end{figure*}
In this section, we demonstrate the performance of our algorithms with simulations on real and synthetic data.  These simulations are mainly
meant to provide a visualization of how our algorithms perform in comparison to the state-of-the-art techniques and should not be seen as verification of the mathematical results in the previous sections.  We note that simulations
only show the loss/gain of an algorithm for a typical sequence of examples; however, the mathematical results of
our paper are the worst-case bounds that hold even for adversarially-generated sequences of examples. \par
For the following simulations, we use four synthesized datasets and one real dataset. We compare \textit{Exp4.MP} with \textit{Exp3.P}~\cite{Auer1995}, \textit{Exp3-IX} \cite{Neu15},  \textit{Exp3.M}~\cite{Uchiya2010}, \textit{FPL
+GR.P}~\cite{Neu2016}, \cite[Figure 2]{Gyorgy2007}, and \cite[Figure 4]{Kale2010}. We compare \textit{Exp3.MSP} with \cite[Figure 4]{Gyorgy2007}, and \textit{Exp3.S} \cite{Auer1995}. We note that all the simulated algorithms are
constructed as instructed in their original publications. The parameters of the individual algorithms are set as instructed by their respective publications. The information of the game length $T$ and the number of the segments in the best strategy $\Sw$ have been 
given a priori to all algorithms. In each subsection, all the compared algorithms are presented to the identical games.
	
\subsection{Robustness of the Performances}
We conduct an experiment to demonstrate the robustness of our high-probability algorithms. For this, we 
run algorithms several times and compare the distributions of their total gains. For the comparison, we use 
\textit{Exp4.MP} with the deterministic and constant advice vectors. We 
compare \textit{Exp4.MP} with \textit{Exp3.P}~\cite{Auer1995}, \textit{Exp3-IX} \cite{Neu15},  \textit{Exp3.M}~\cite{Uchiya2010}, \textit{FPL
+GR.P}~\cite{Neu2016} and the high-probability algorithm introduced by Gy{\"o}rgy et al.\ in~\cite{Gyorgy2007}. 
Since Gy{\"o}rgy et al.\ did not name their algorithms, we use \textit{GYA-P} to denote their high-probability 
algorithm. We highlight that all the algorithms except \textit{Exp3.M} guarantee a regret bound with high 
probability, whereas \textit{Exp3.M} guarantees an expected regret bound. \par
For this experiment, we construct a $10$-arm bandit game where we choose $5$ bandit arms at each round. All gains are generated by independent draws of Bernoulli random variables. In the first half of the
game, the mean gains of the first $5$ arms are $0.5 + \epsilon$, and the mean gains of others' are $0.5 - \epsilon$. In the second half of the game, the mean gains of the first $5$ arms have been reduced to $0.5 - \epsilon$, while the mean gains of others' have been increased to $0.5 + 4\epsilon$. We point out that based on these selections, the 
$m$-arm consisting of the last $5$ arms performs better than the others in the full game length. \par
We set the parameters $T=10^4$, $\epsilon=0.1$ for all the algorithms, and $\delta=0.01$ for the high-probability algorithms. Our experiments are repeated $100$ times to obtain statistically significant results. Since the game environments are the same for all the algorithms, we directly compare the total gains. We study the total gain up to two interesting rounds in the game: up to $T/2$, where the losses are independent and identically distributed, and up to $T$, where the algorithms have to notice the shift in the gain distribution. \par
We have constructed box plots by using the resulting total gains of the algorithms. In the box plots, the lines extending from the boxes (the whiskers) illustrate the minimum and the maximum of the data. The boxes extend from the first quartile to the third quartile. The horizontal lines and the stars stand for the median and the mean of the distributions. Fig. \ref{fig:up_to_T2} illustrates the distributions of the total gains up to $T/2$. We observe that the variance of all the total gains are comparable. The mean total gain received by \textit{Exp4.MP} is only comparable with that of \textit{Exp3.M} while outperforms the rest. On the other hand, when the change occurred in the game, 
\textit{Exp4.MP} outperforms the rest in the overall performance (Fig. \ref{fig:up_to_T}). As expected, \textit{Exp3.M} and \textit{Exp4.MP} receive relatively higher gains than the other algorithms. However, since we give a special care for bounding the variance, \textit{Exp4.MP} has a more robust performance. From the results, we can conclude that \textit{Exp4.MP} yields the superior performance of the algorithms with an expected regret guarantee and the robustness of the high-probability algorithms at the same time.
\subsection{Choosing an $m$-arm with Expert Advice}
\begin{figure*}[t!]
    \centering
    \begin{tabular}{ccc}
    \begin{subfigure}[b]{0.3\textwidth}
        \centering
        \includegraphics[width=\textwidth]{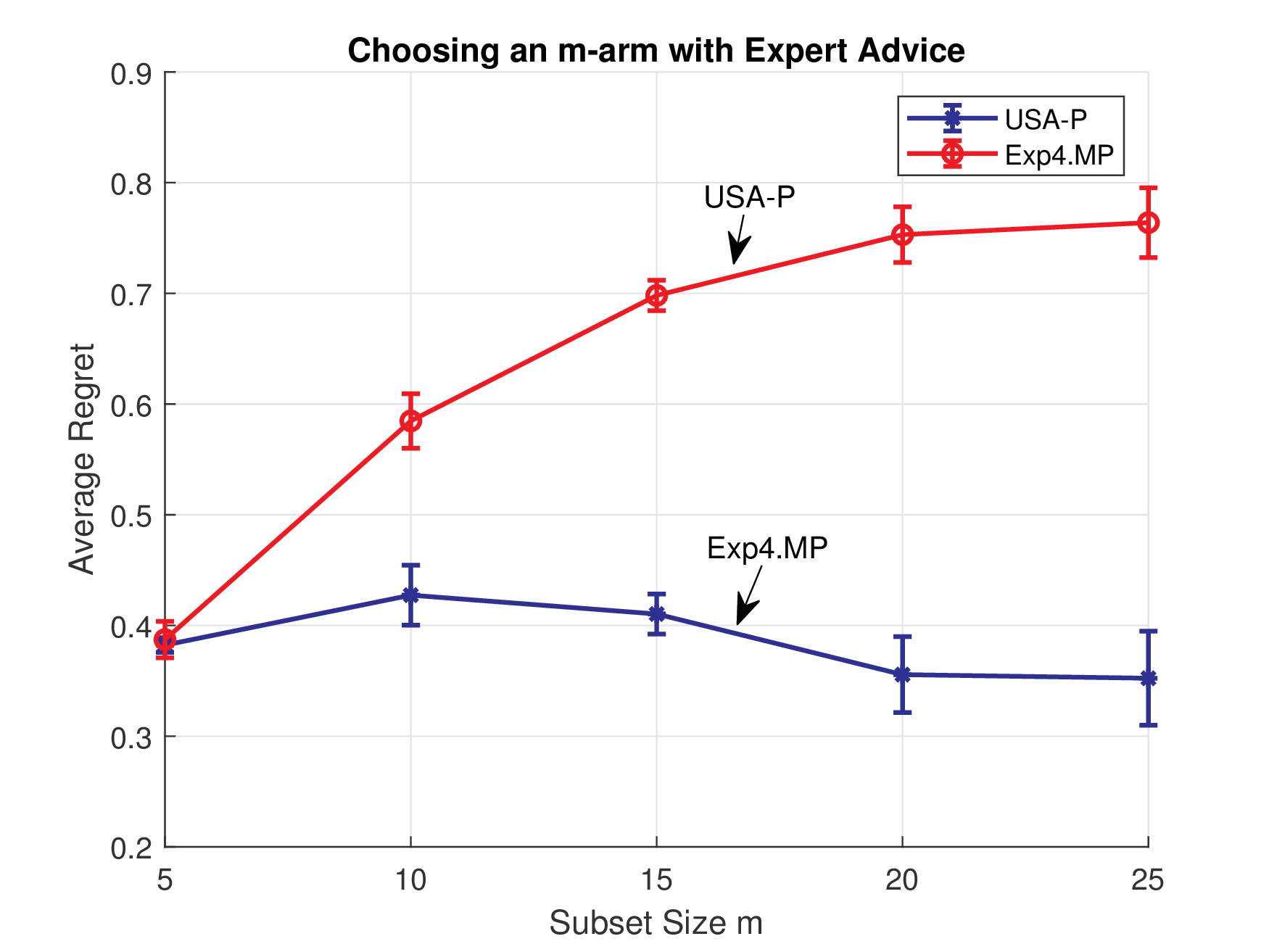}\\
        \caption{}\label{fig:varm}
    \end{subfigure} &
    \begin{subfigure}[b]{0.3\textwidth}
        \centering
        \includegraphics[width=\textwidth]{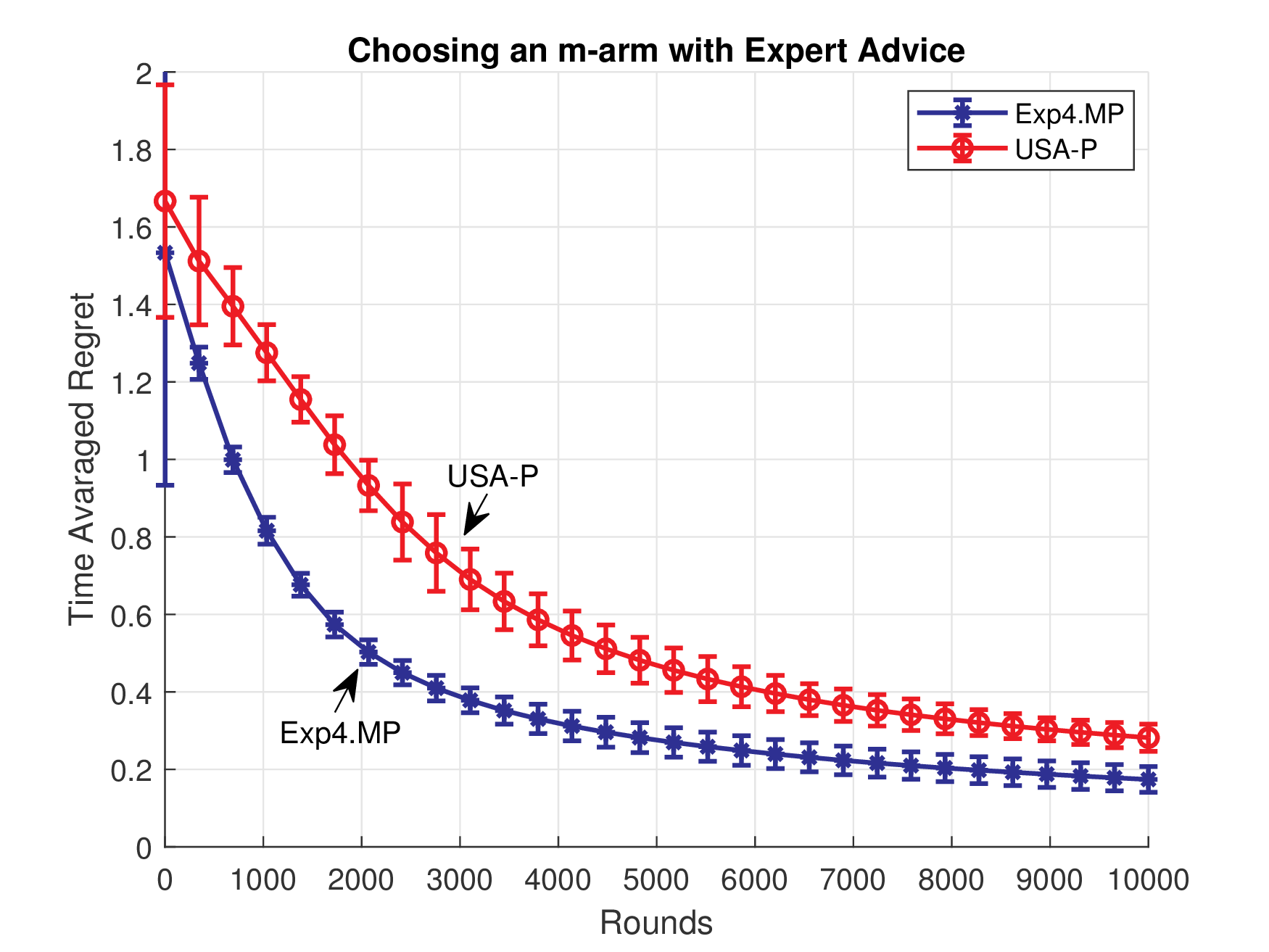}\\
        \caption{}\label{fig:vart}
    \end{subfigure} &
     \begin{subfigure}[b]{0.3\textwidth}
        \centering
        \includegraphics[width=\textwidth]{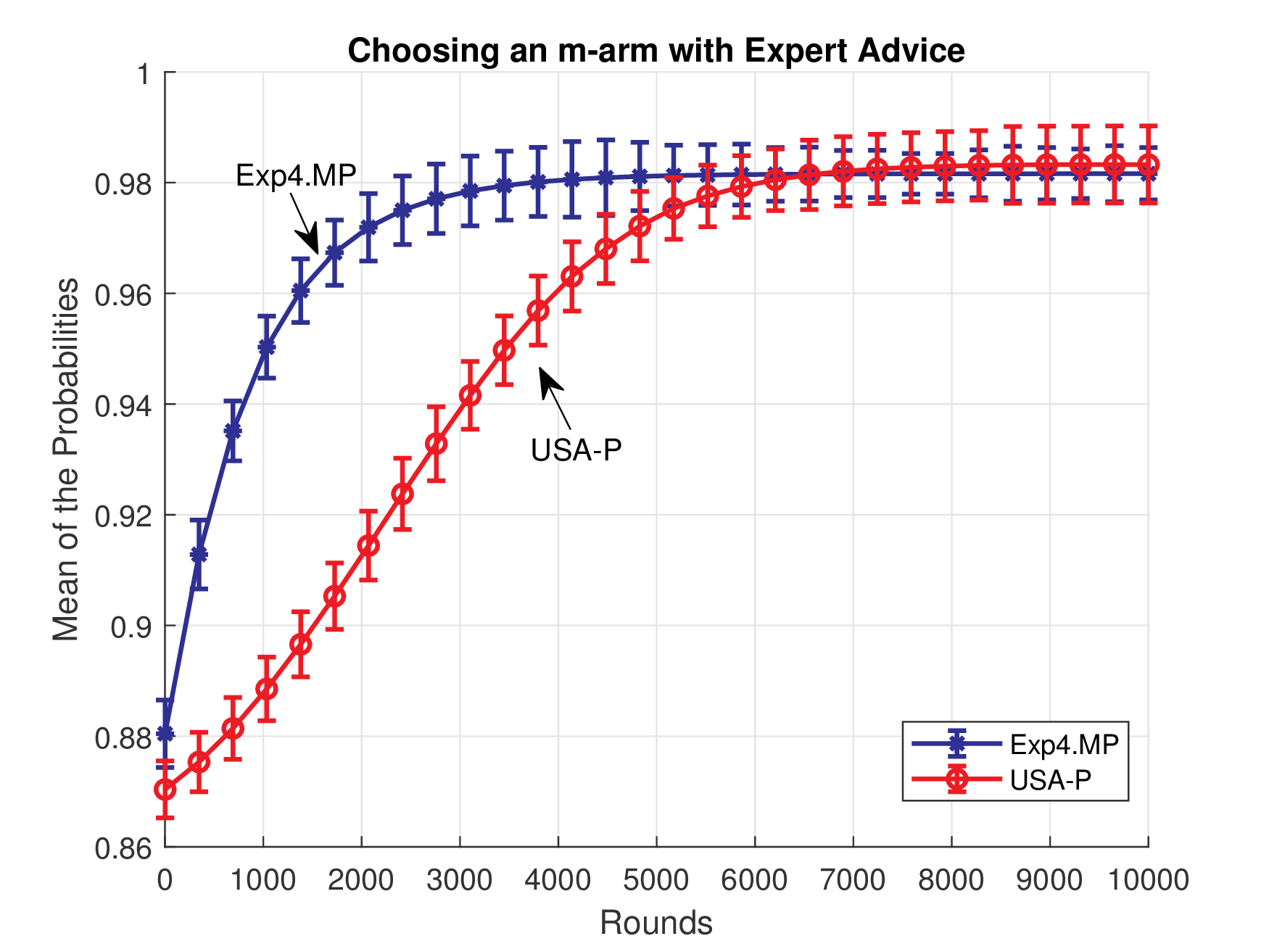}\\
        \caption{}\label{fig:varp}
    \end{subfigure}
    \end{tabular}
    \caption{(a) Per round regret performances of the algorithms with increasing $m$. (b) Time-averaged regret performances of the algorithms in a game where $K=30$, $m=15$. (c) Mean of the probabilities assigned to the optimum $m$ arms by both algorithms when $m=15$.}\label{fig:experts}
\end{figure*}
\begin{figure*}[t!]
\centering
\begin{tabular}{@{}cc@{}}
    \begin{subfigure}[b]{0.5\textwidth}
        \centering
        \includegraphics[width=0.7\textwidth]{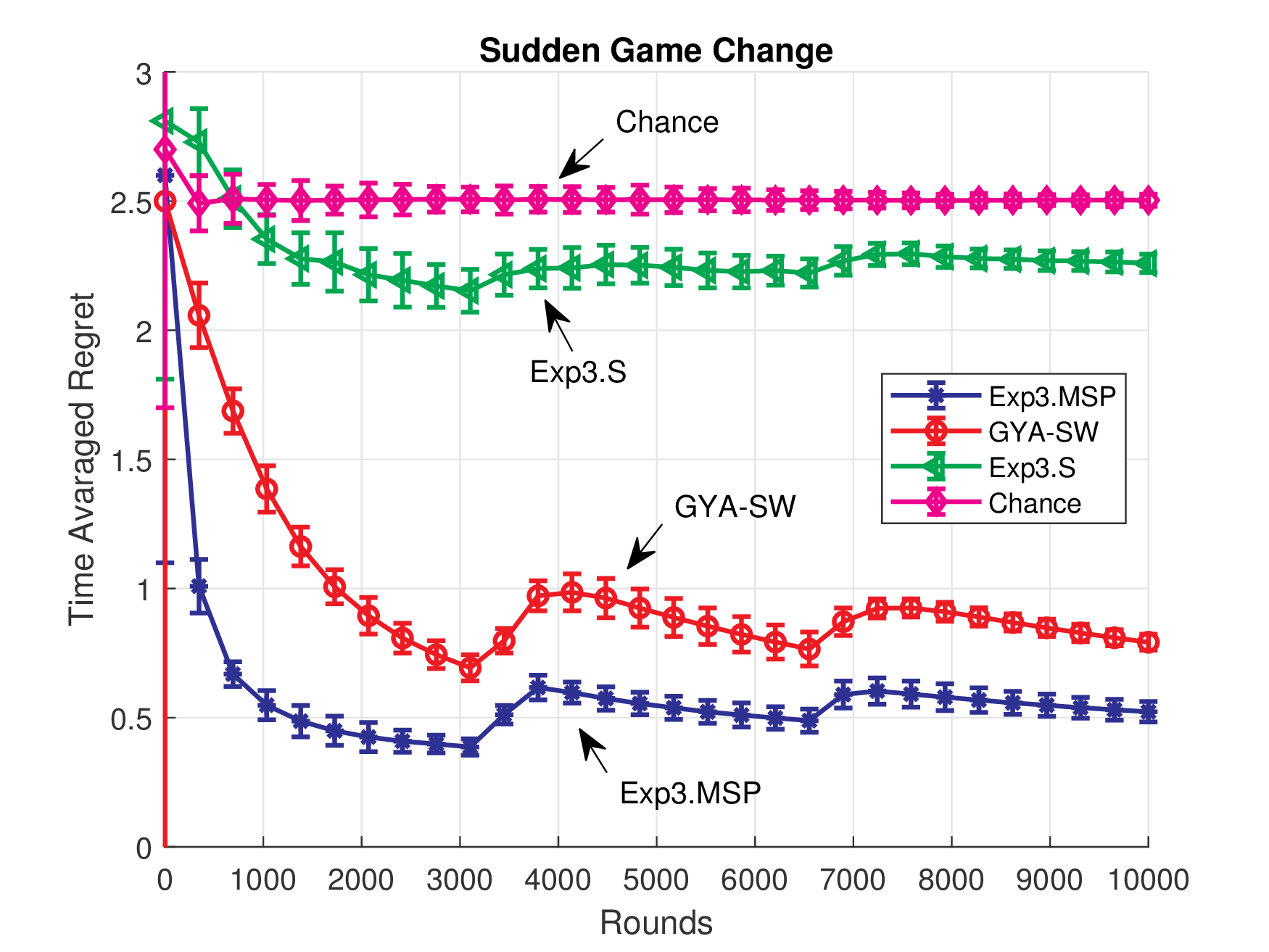}\\
        \caption{}\label{fig:sw_regret}
    \end{subfigure} &
    \begin{subfigure}[b]{0.5\textwidth}
        \centering
        \includegraphics[width=0.7\textwidth]{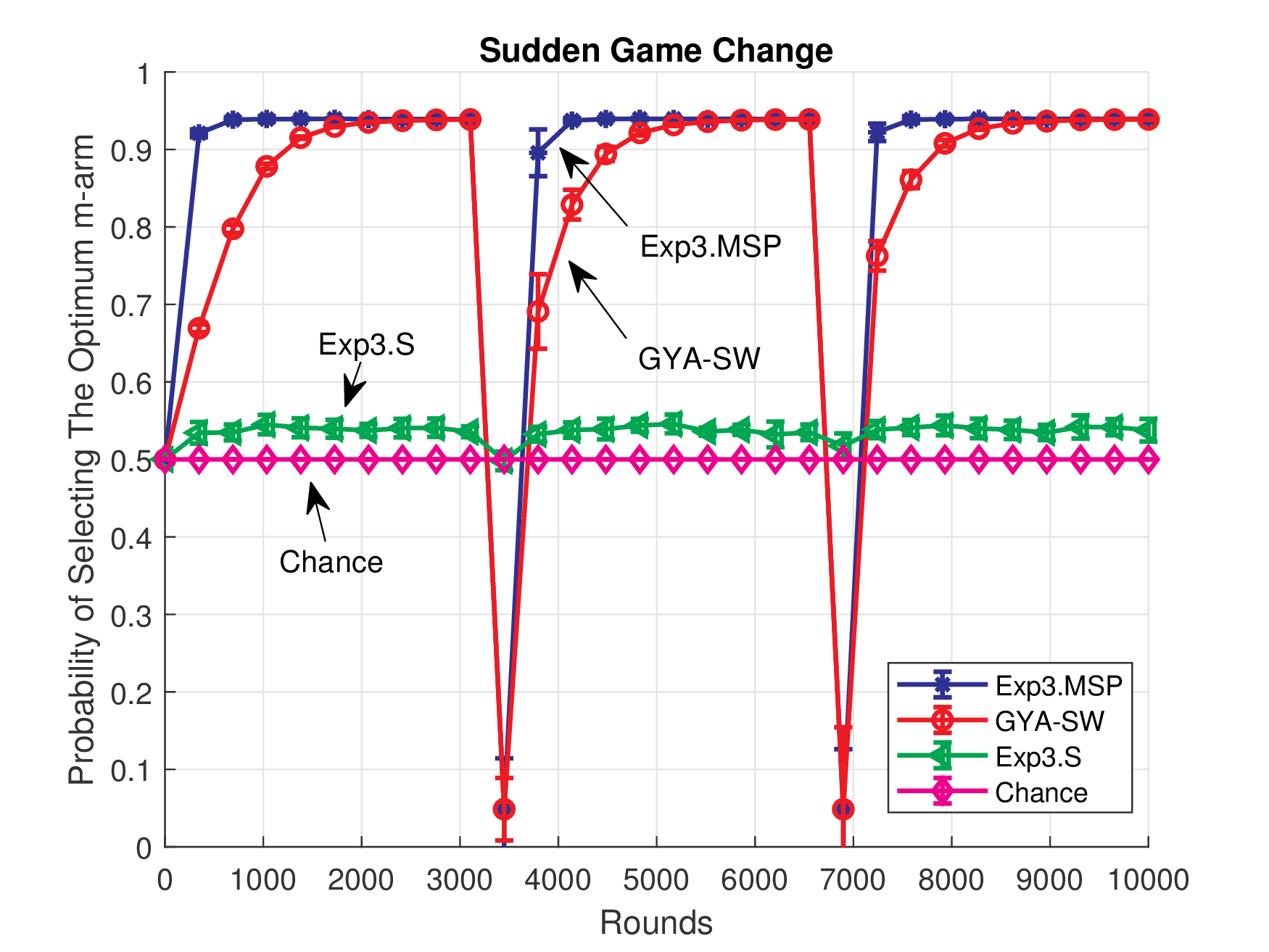}\\
        \caption{}\label{fig:sw_prob}
    \end{subfigure}
    \end{tabular}
    \caption{(a) Time averaged regrets of the algorithms in a game where $K=10$, $m=5$ and the optimum $m$-arm changes at every $3333$ rounds. (b) Probabilities of
selecting the optimum $m$-arm for all of the algorithms in sudden game change setting.}\label{fig:experts}
\end{figure*}
In this part, we demonstrate the performance of \textit{Exp4.MP} when the advice vectors of the actual advice vectors are not necessarily constant nor deterministic. We compare our algorithm with the only known algorithm that is capable of choosing $m$-arm with expert advice, i.e., \textit{Unordered Slate Algorithm with policies (USA-P)} introduced in~\cite{Kale2010}.
Since our main point is to improve the regret bound by $O(\sqrt{m})$, we compare algorithms under different subset sizes. For this, we construct five $30$-armed games, where we choose $m \in \{5,10,15,20,25\}$
arms respectively. In each game, the gains of the first $m$ arms are $1$, and the gains of the others are $0$ throughout the game. In order to satisfy the condition $N_r=O(N^{\frac{1}{m}})$, we first generate the underlying expert set where $N_r=m+2$. The generation process is as follows: The first $m$ underlying experts are chosen constant and deterministic where $\bm{\zeta^i}=\bm{1}_i$ for $i\in \{1,\cdots,m\}$. The first $m$ entries of the last two
 underlying experts are chosen $0$, i.e., $\zeta^{m+1}_{j}(t)=\zeta^{m+2}_{j}(t)=0$ for $j\in \{1,\cdots,m\}$, while
the other entries are determined randomly at each round under the constraint that their sum is $1$. The actual vectors are generated 
by summing each $m$ sized subset of the underlying expert set at each round, where we have a total of $\binom{m+2}{m}$ experts. We note that based on our arm gains selection and the advice vector generating process, the actual expert which is the sum of the constant underlying experts is the best expert.\par
In the experiment, we set the parameters $T=10^4$ for both algorithms and $\delta=0.01$ for \textit{Exp4.MP}. We have repeated all the games $100$ times and plotted the ensemble distributions in Fig. \ref{fig:varm}, Fig. \ref{fig:vart}, and Fig. \ref{fig:varp}.
Fig. \ref{fig:varm} illustrates the time averaged regret incurred by the algorithms at the end of the games with increasing $m$. As can be seen, the regret incurred by our algorithm remains almost constant while the regret of \textit{USA-P} increases as $m$ increases. In order to observe the temporal performances, we have plotted Fig.  \ref{fig:vart}, which illustrates the time averaged regret performances of the algorithms
 when $K=30$, and $m=15$. We observe that our algorithm suffers a lower regret value at each round. To analyze this difference in the performances, we have also plotted the mean of the probability values assigned to the optimum $m$ arms by the algorithms at each round when $m=15$ (Fig. \ref{fig:varp}). We observe that \textit{USA-P} saturates at the same probability value as \textit{Exp4.MP}, its convergence rate is slower. Therefore, since our algorithm can explore the optimum $m$-arm more rapidly, it is able to achieve better performance, especially in high $m$ values.
\subsection{Sudden Game Change}
\begin{figure*}[t!]
   \centering
\begin{tabular}{cc}
\begin{subfigure}[b]{0.5\textwidth}
   \centering
\includegraphics[width=0.7\textwidth]{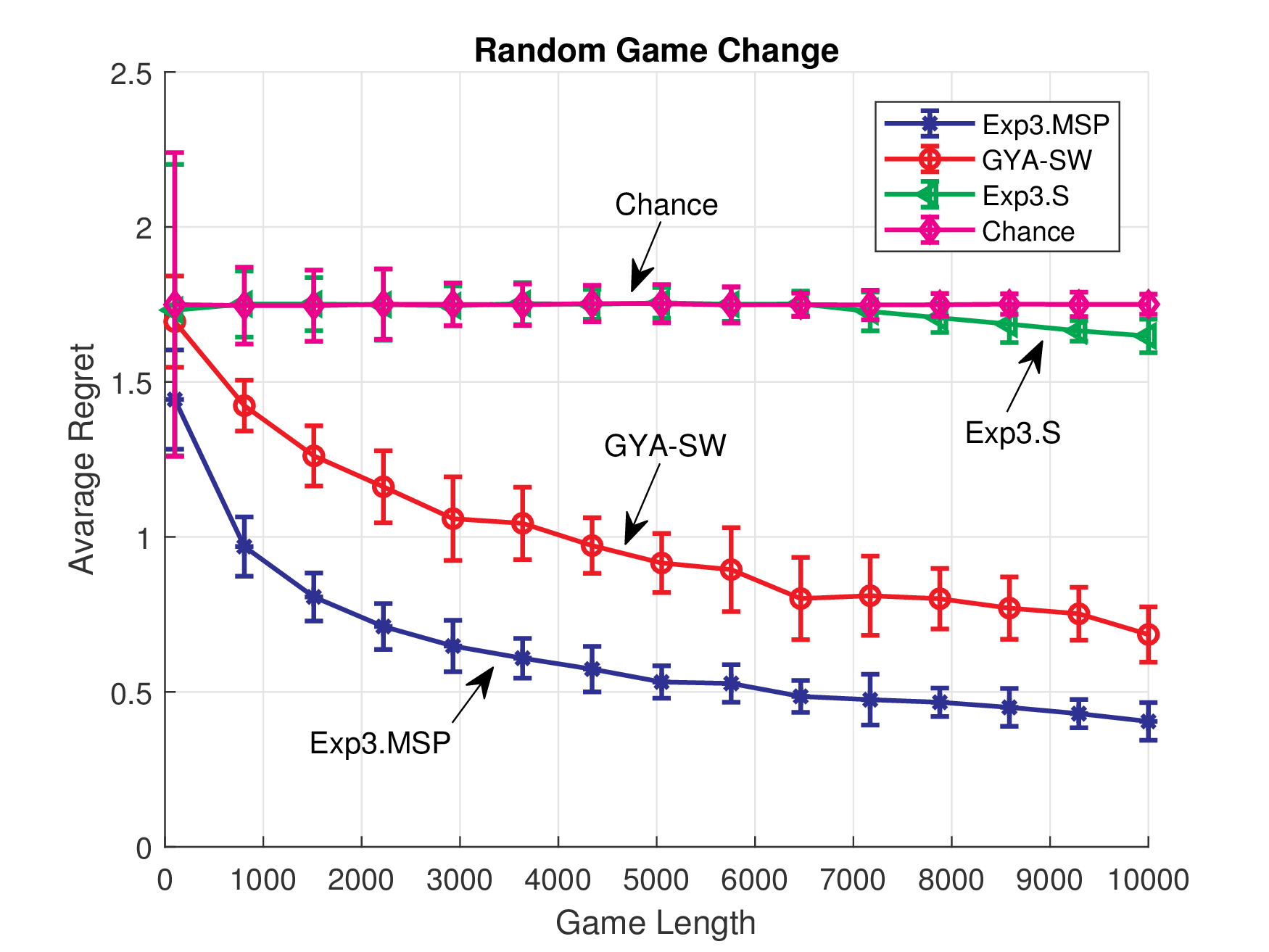}
\caption{} \label{fig:hor}
\end{subfigure}
\begin{subfigure}[b]{0.5\textwidth}
   \centering
\includegraphics[width=0.7\textwidth]{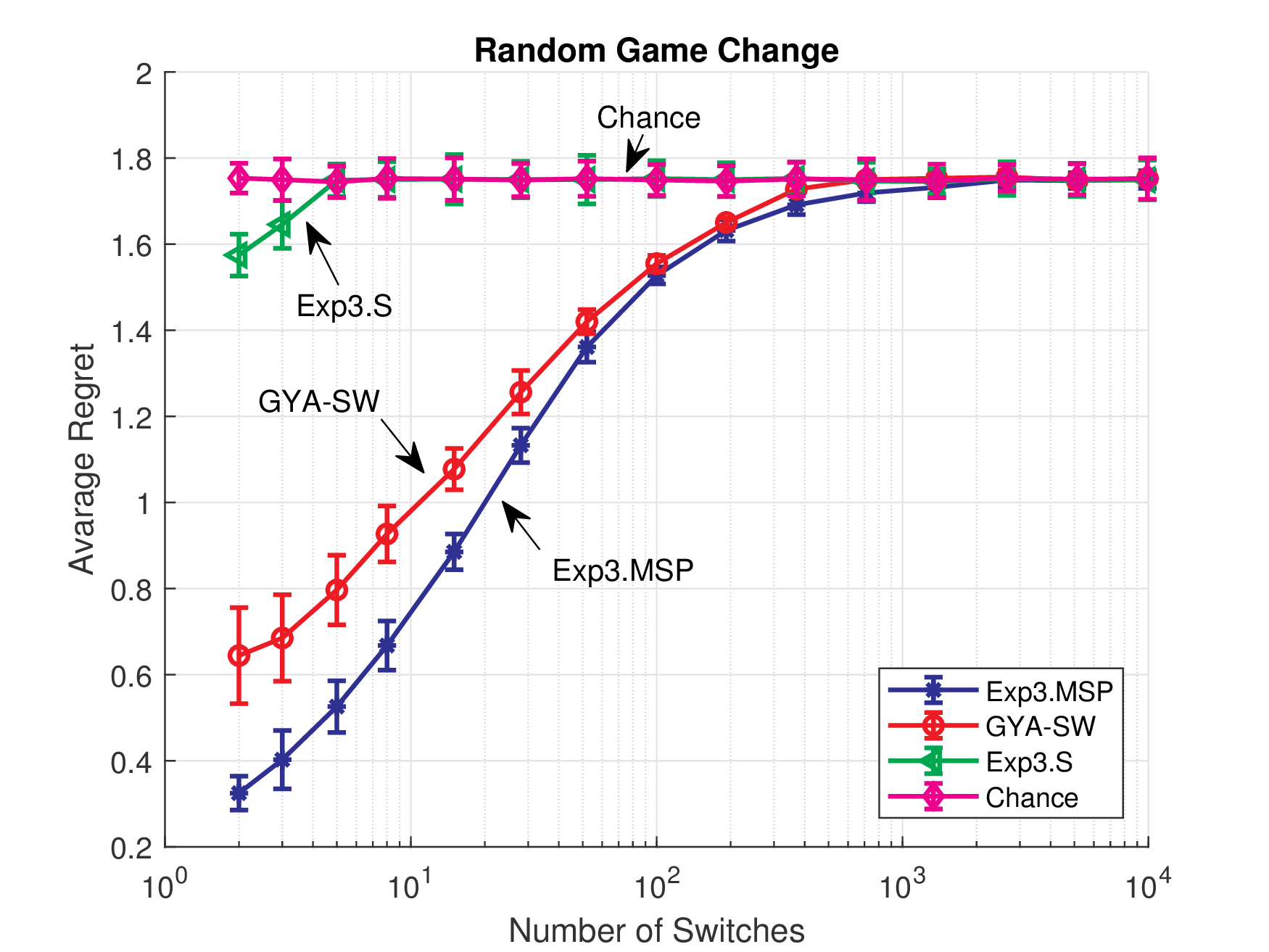}
\caption{} \label{fig:sw}
\end{subfigure}\\
\begin{subfigure}[b]{0.5\textwidth}
   \centering
\includegraphics[width=0.7\textwidth]{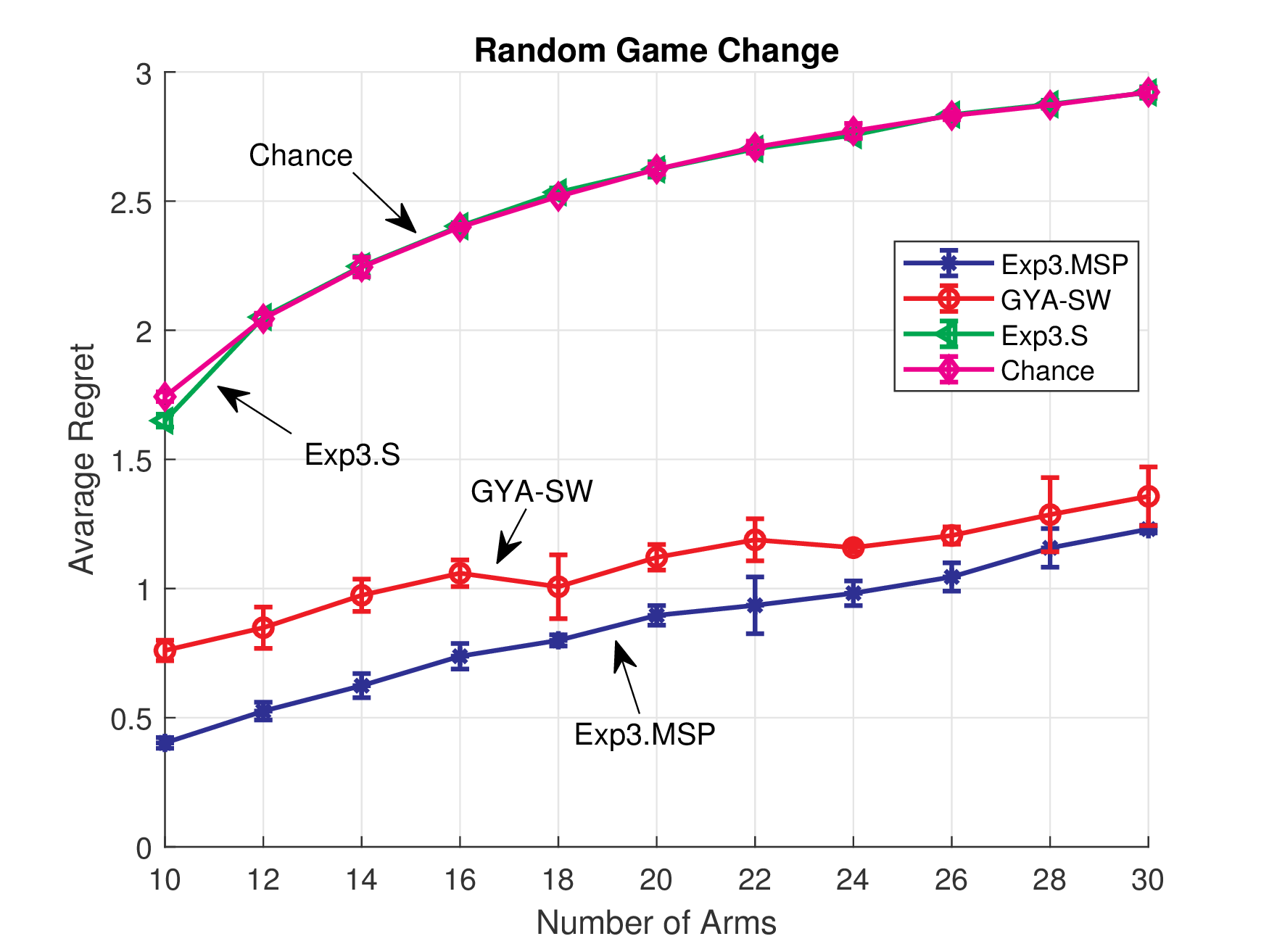}
\caption{} \label{fig:arm}
\end{subfigure}
\begin{subfigure}[b]{0.5\textwidth}
   \centering
\includegraphics[width=0.7\textwidth]{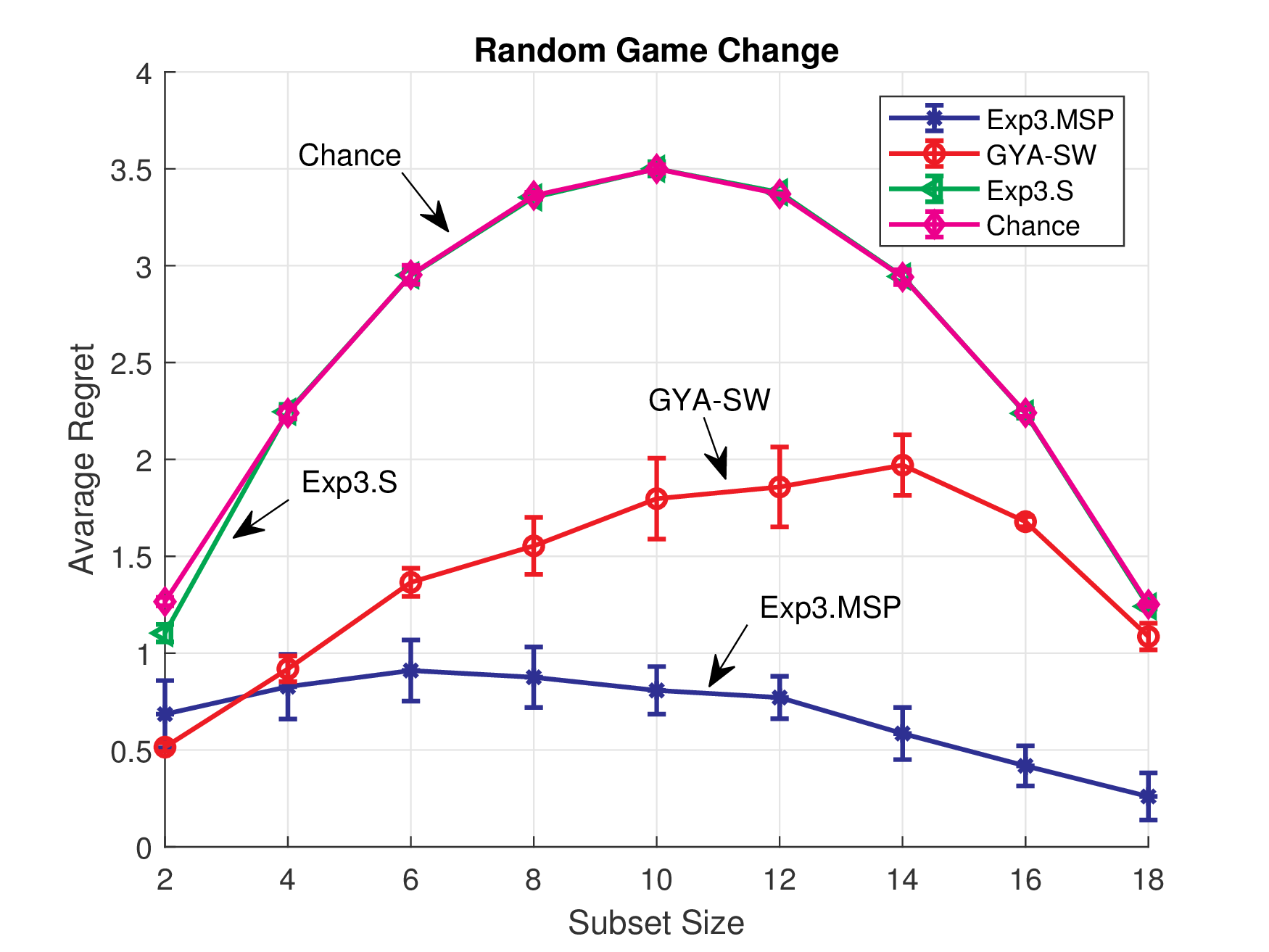}
\caption{} \label{fig:subset}
\end{subfigure}\\
   \end{tabular}
    \caption{(a) Per round regrets of the algorithms with increasing game length. (b) Per round regrets of the algorithms with increasing
number of switches. (c) Per round regrets of the algorithms with increasing number of bandit arms. (d) Per round regrets of the algorithms with increasing number of subset size.}
\end{figure*}
In this section, we demonstrate the performance of \textit{Exp3.MSP} in a synthesized game. We compare our algorithm with Exp3.S and the algorithm introduced by Gy{\"o}rgy et al. in \cite[Section 6]{Gyorgy2007}. Since Gy{\"o}rgy et al.\ did not name their algorithms, we use
\textit{GYA-SW} to denote their algorithm. We also compare each algorithm against the trivial algorithm, Chance (i.e., random guess) for a baseline comparison.  \par
For this experiment, we construct a game of length $T=10^4$, where we need to choose $5$ arms out of $10$ bandit arms. The gains of the arms are deterministically selected as follows: Up to round $3333$, the gains of
the first $5$ arms are $1$, while the gains of the rest are $0$. Between rounds $3334$ and $6666$, the gains 
of the last $5$ arms are $1$, while the gains of the rest $0$. In the rest of the game, the gain distribution is 
the same as in the first $3333$ rounds. The optimum $m$-arms at consecutive segments are intentionally selected mutually exclusive in order to simulate sudden changes effectively. We point out that based on our arm gains selections, the number of segments in the optimum $m$-arm sequence is $3$, i.e., $S=3$. \par
For \textit{Exp3.MSP} and \textit{GYA-SW}, we set $\delta=0.01$. We have repeated the games $100$ times and plotted the ensemble distributions in Fig. \ref{fig:sw_regret} and Fig. \ref{fig:sw_prob}. Fig. \ref{fig:sw_regret} illustrates the time-averaged regret performance of the algorithms. We observe that our algorithm has a lower regret value at any time instance. To analyse this, we have
plotted the mean probability values assigned to the optimum $m$ arms by the algorithms at each round in Fig. \ref{fig:sw_prob}. We observe that \textit{Exp3.S} cannot reach high values of probability due to the exponential size of its action set. We also see that although \textit{GYA-SW} saturates at the same probability value as \textit{Exp3.MSP}, its convergence rate is slower. Therefore, Fig. \ref{fig:sw_prob} shows that since our algorithm adapts  faster to the changes in the environment, it achieves a better performance throughout the game.
\subsection{Random Game Change}
\begin{figure*}[t!]
    \centering
    \begin{tabular}{ccc}
    \begin{subfigure}[b]{0.32\textwidth}
        \centering
        \includegraphics[width=\textwidth]{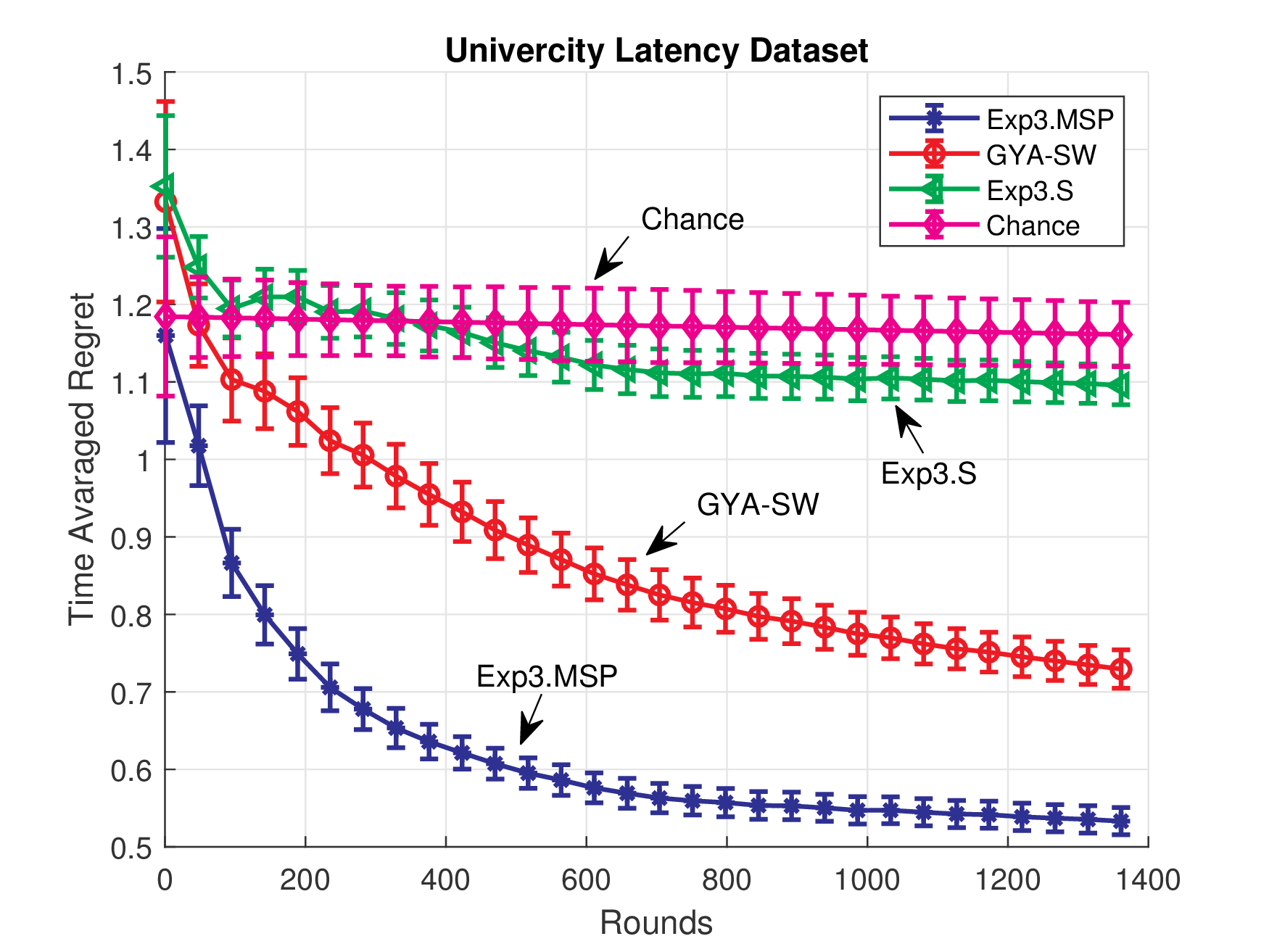}\\
        \caption{}\label{fig:unik}
    \end{subfigure}
    \begin{subfigure}[b]{0.32\textwidth}
        \centering
        \includegraphics[width=\textwidth]{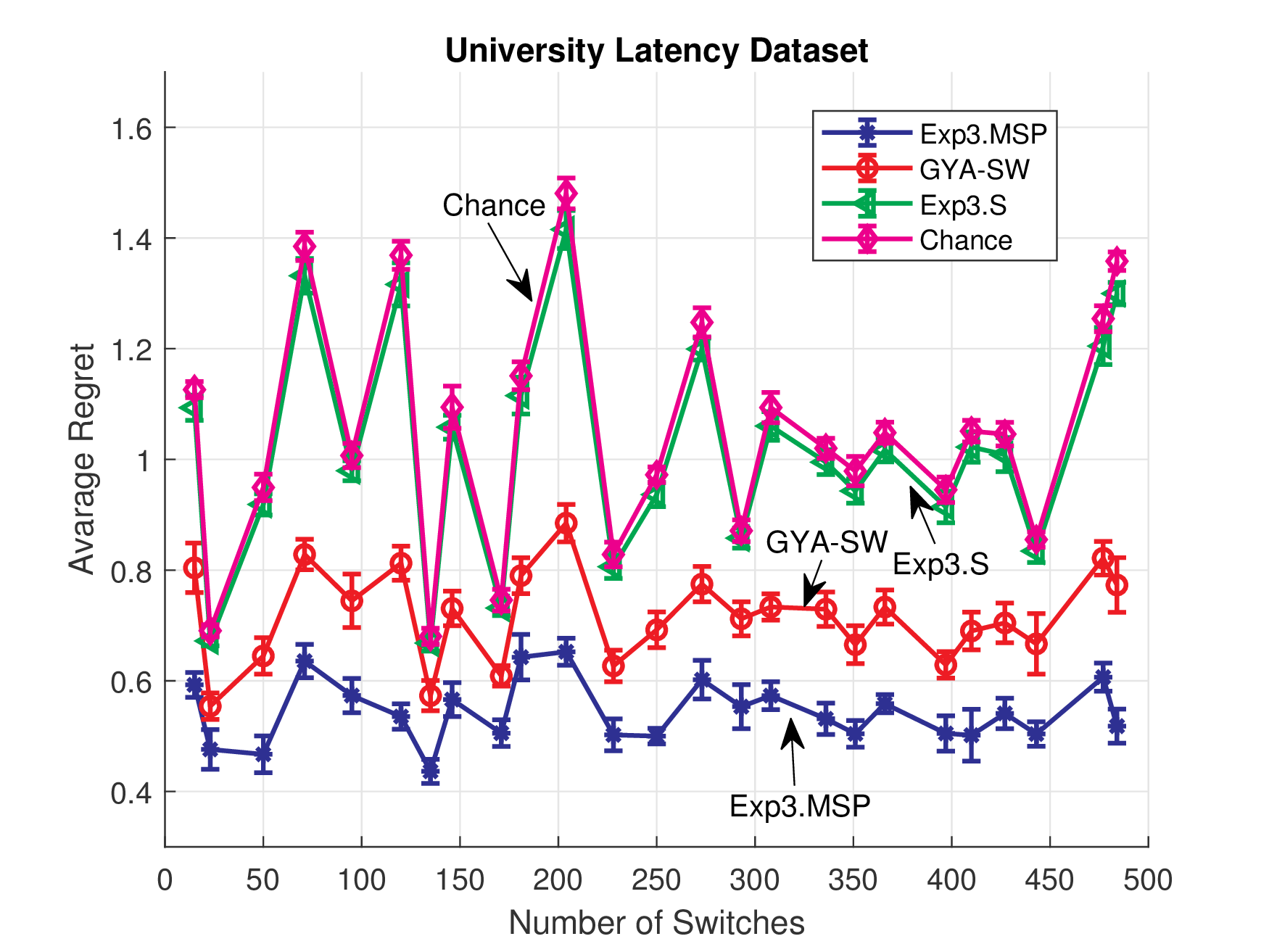}\\
        \caption{}\label{fig:unis}
    \end{subfigure}
     \begin{subfigure}[b]{0.32\textwidth}
        \centering
        \includegraphics[width=\textwidth]{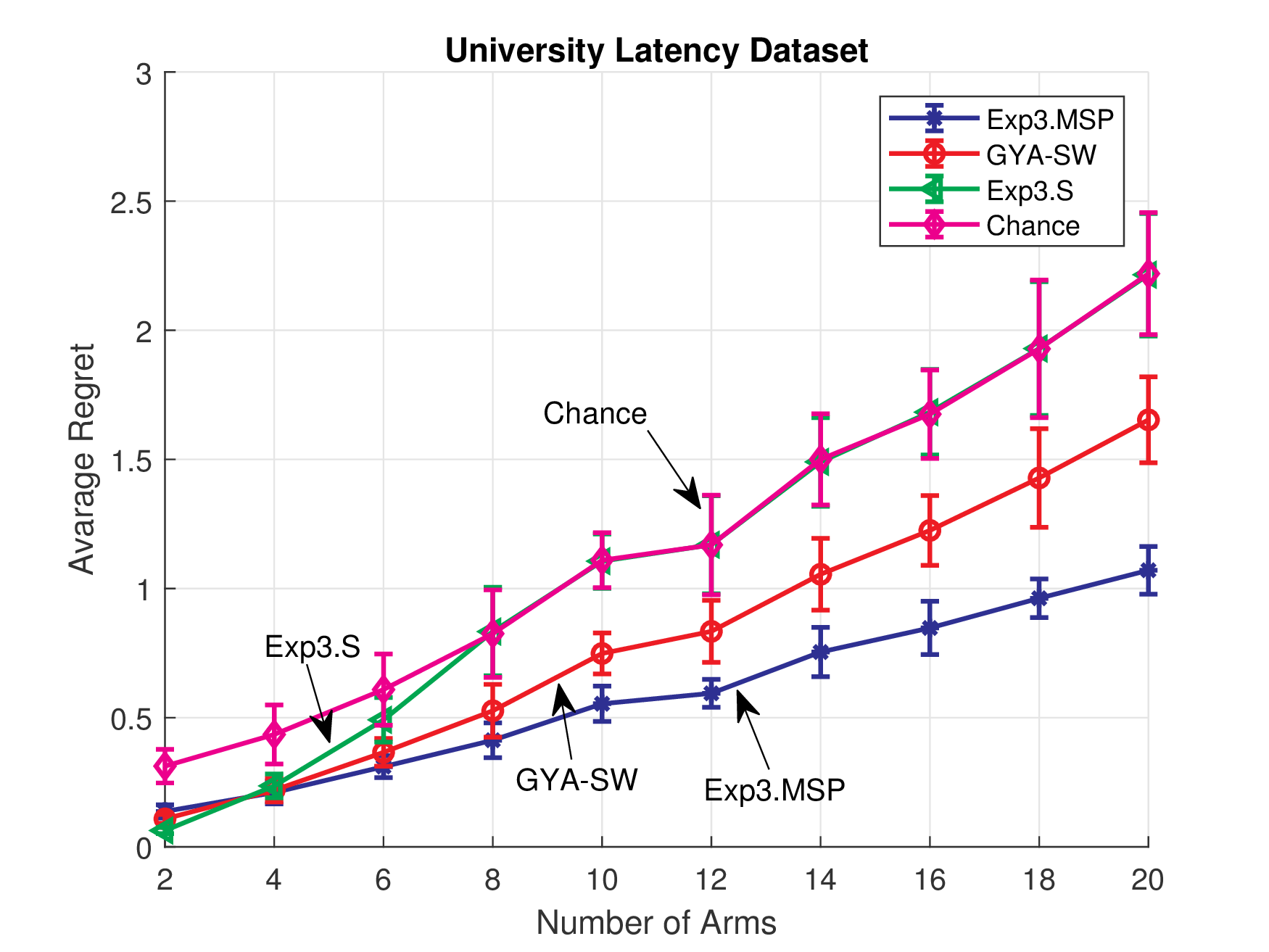}\\
        \caption{}\label{fig:unia}
    \end{subfigure}
    \end{tabular}
    \caption{(a) Time averaged regrets in "univ-latencies" dataset when $K=10$, $m=5$ and $S=3$. (b) Per round regret performances of the algorithms at the end of the games with increasing number of switches. (c) Per round regret performances of the algorithms at the end of the games with increasing number of bandit arms.}\label{fig:uni_lat}
\end{figure*}
In this part, we demonstrate the performance of \textit{Exp3.MSP} on random data sequences. We compare our algorithm with two state-of-the-art techniques: \textit{Exp3.S}\cite{Auer1995} and \textit{GYA-SW}\cite{Gyorgy2007}. We also compare each algorithm against the trivial algorithm, Chance (i.e., random guess) for a baseline comparison.  
For this experiment, we construct a game whose behavior is completely random with the only regularization condition being an $m$-arm should be optimum throughout a segment. We start to synthesize the dataset by randomly selecting gains in $[0,1]$ for all arms for all rounds. We predetermine the optimum $m$-arms in each segment and then switch the maximum gains with the gains of the optimum $m$-arm at each round. This synthesized dataset creates a game with randomly determined gains while maintaining that one $m$-arm is uniformly optimum throughout each segment.  We synthesize multiple datasets to analyze the effects of the parameters of the game individually, where we compare the algorithms’ performances for varying game length $(T)$, number of switches $(S)$, number of arms $(K)$ and subset size $(m)$. We start with the control group of $T = 10^4$, $K = 10$, $m=5$, $S = 3$ and both $T$ and $S$ is known a priori. Then, for each case, we vary one of the above four parameters. Differing from before, the time instances of switches are not fixed to $3333$ and $6666$ but instead selected randomly to be in anywhere in the game. Thus, we create completely random games with three segments. \par
To observe the effect of game length, we selected the $15$ different game lengths, which are linearly spaced between $10^2$ and $10^4$. We provided the algorithms with the
prior information of both the game length and the number of
switches. In Fig. \ref{fig:hor}, we have plotted the average regret incurred at the end of the game, i.e., $R(T)/T$, by the algorithms at different values of game length while fixing the other parameters.
We note that the error bars in Fig. \ref{fig:hor} illustrate the maximum and the minimum average regret incurred at a fixed value of game length. For any set of parameters, we have simulated the setting for $25$ times with recreating the game each time to obtain statistically significant results.  We observe that the algorithm \textit{Exp3.S} performs close to random guess up to approximately game length of $7000$ rounds, which is expected since it assumes each action as a separate arm. We also observe that \textit{Exp3.MSP} and \textit{GYA-SW} perform better than the chance for all values of game length. However, there is a significant performance difference in favor of \textit{Exp3.MSP}. \par
To observe the effect of the number of switches on the
performances, we created random change games with $10$ arms, subset size $m=5$
and game length of $10^4$. We provided the algorithms with the
prior information of both the game length and the number of
switches. We selected $15$ different switch values, which are logarithmically spaced between $2$ and $10^4$. In Fig. \ref{fig:sw}, we have plotted
the average regret incurred at the end of the game by
the algorithms at different values of number of switches while
fixing the other parameters. For any set of parameters, we
have simulated the setting for $25$ times with recreating the
game each time. The algorithm \textit{Exp3.S} performs similar to
random guess after approximately $5$ switches, i.e., $S = 5$.
Both \textit{Exp3.MSP} and \textit{GYA-SW} catch random guess
at $S = 1000$, which is comparable to the value of game length, i.e., $T=10^4$.
However, \textit{Exp3.MSP} manages to provide better performance than the
other algorithms for all number of switches. \par
To observe the effect of the number of bandit arms on the
performances, we created random change games with $3$ segments, subset size of $5$ and game length of $10^4$. We provided the algorithms
with the prior information of both the game length and the
number of switches. We selected the number of bandit arms to
be even numbers between $10$ and $30$. In Fig. \ref{fig:arm}, we have
plotted the average regret incurred at the end of the game by the algorithms at different values of the number of bandit arms
while fixing the other parameters. For any set of parameters,
we have simulated the setting for $25$ times with recreating the
game each time. The algorithm \textit{Exp3.S} performs
similar to random guess after approximately $12$ bandit arms.
\textit{Exp3.MSP} and \textit{GYA-SW} outperform random guess for all values of bandit arms. On the other hand, \textit{Exp3.MSP} outperforms
all algorithms for all values of bandit arms uniformly. \par
To observe the effect of the subset size on the
performances, we created random change games with $3$ segments, $20$ bandit arms and game length of $10^4$. We provided the algorithms
with the prior information of both the game length and the
number of switches. We selected the subset size to
be even numbers between $2$ and $18$. In Fig. \ref{fig:subset}, we have
plotted the average regret incurred at the end of the game by the algorithms at different values of number of subset sizes
while fixing the other parameters. For any set of parameters,
we have simulated the setting for $25$ times with recreating the
game each time. We observe that the algorithm \textit{Exp3.S} performs
similar to random guess after subset size is equal to $4$.
\textit{Exp3.MSP} and \textit{GYA-SW} outperform random guess for all values of subset sizes. On the other hand, \textit{Exp3.MSP} yields better performance,
especially in high values of subset size.
\subsection{Online Shortest Path}
In this subsection, we use a real-world networking dataset that corresponds to the retrieval latencies of the homepages of $760$ universities. The pages were probed every $10$ min for about $10$ days in May $2004$ from an internet connection located in New York, NY, USA \cite{data}. The resulting data includes $760$ URLs and $1361$ latencies (in millisecond) per URL. For the setting, we consider an agent that must retrieve data through a network with several redundant sources available. For each retrieval, the agent is assumed to select $m$ sources and wait until the data is retrieved. The objective of the agent is to minimize the sum of the delays for the successive retrievals. Intuitively, each page is associated with a bandit arm and each latency with a loss. \par
Before experiments, we have preprocessed the dataset. We observed that the dataset includes too high latencies. Therefore, we have truncated the latencies at $1000$ ms, which can be thought of as timeout for a more realistic real-world setting. We have normalized the truncated latencies into $[0,1]$, and converted them to the gains by subtracting from $1$. Since there are $1361$ latencies for each URL, we set $T=1361$. The other parameters are selected as $\delta=0.01$ and $S=3$. We note that the value of $S$ we chose might not correspond to the actual segment number in the optimum $m$-arm sequence. Nonetheless, it is selected arbitrarily to simulate the practical cases where the value of $S$ is not available a priori.   \par
Using the universities as the bandit arms, we have extracted $100$ different games with $10$
bandit arms. For each game,
we assumed that the agent chose $5$ arms, i.e., $K=10$ and $m=5$. We have repeated each game $20$ times and plotted the time-averaged regrets in Fig. \ref{fig:unik}. Similar to all of
the tests before, we observe that our algorithm achieves a better performance in all time instances. Additionally, we do benchmarks on the number of segments and the number of arms. For the number of
segments benchmark, we have extracted $20$ $10$-arm games, where the actual segment number in the optimum strategy is in the interval $[20i,20i+19]$ for each game indexed by $i\in\{0,1,\cdots,19\}$. We have repeated each game $20$ times and plotted the ensemble distributions in Fig \ref{fig:unis}.  Interestingly, there is no direct correlation between the segment numbers and the average regret values. Furthermore, as can be observed, irrespective of the number of segments, our algorithm outperforms the other algorithms for all number of switches. For the number of bandit arms, we have used the even numbers from $2$ to $20$ and chose the half of the arms in every game.  For each number of arms, we have extracted 20 different games. We have run each game 20 times plotted the ensemble distributions in Fig. \ref{fig:unia}. As expected, the regret values of the algorithms increase with the number of arms. Moreover, the difference in performances becomes more apparent as $m$ increases, which is consistent with our theoretical results.
\section{Concluding Remarks}\label{sec:conclusion}
We studied the adversarial bandit problem with multiple plays, which is a widely used framework to model online shortest path and online advertisement placement problems\cite{KoolOSP, Nakamura2005}. In this context, as the first time in the literature, we have introduced an online algorithm that truly achieves (with minimax optimal regret bounds) the performance of the best multiple-arm selection strategy. Moreover, we achieved this performance with computational complexity only log-linear in the arm number, which is significantly smaller than the computational complexity of the state-of-the-art\cite{Gyorgy2007}. We also improved the best-known high-probability bound for the multi-play setting by $O(\sqrt{m})$, thus, close the gap between high-probability bounds\cite{Neu2016,Gyorgy2007} and the expected regret bounds\cite{Kale2010,Uchiya2010}. We achieved these results by first introducing a MAB-MP with expert advice algorithm that is capable of utilizing the structure of the expert set. Based on this algorithm, we designed an online algorithm that sequentially combines the selections of all possible $m$-arm selection strategies with carefully constructed weights. We show that this algorithm achieves minimax regret bound with respect to the best switching $m$-arm sequence and it can be efficiently implementable with a weight-sharing network applied on the individual arm weights. Through an extensive set of experiments involving synthetic and real data, we demonstrated significant performance gains achieved by our algorithms with respect to the state-of-the-art adversarial MAB-MP algorithms  \cite{Gyorgy2007,Kale2010,Uchiya2010,Neu2016,Auer1995}.
\appendices
\section{}
\label{appa}
\subsection{Dependent Rounding (DepRound)}
\begin{algorithm}[t!]
\algsetup{linenosize=\small}
\small
	\caption{DepRound}\label{alg:DepRound}
	\begin{algorithmic}[1]
		\STATE \textbf{Inputs:} The subset size $m(<K)$, $(p_1,p_2,\cdots,p_K)$ with $\sum_{i=1}^K p_i=m$
		\STATE \textbf{Output:} Subset of $[K]$ with $m$ elements
		\WHILE{there is an $i$ with $0 < p_i < 1$} 
		\STATE Choose distinct $i$ and $j$ with $0 < p_i < 1$ and $0 < p_j < 1$
		\STATE Set $\alpha = \min(1-p_i, p_j)$ and $\beta = \min(p_i,1-p_j)$
		\STATE Update $p_i$ and $p_j$ as 
		\begin{align*}
			(p_i,p_j)=  \left\{
			\begin{array}{ll}
 			(p_i+\alpha,p_j-\alpha) &\textrm{with probability } \frac{\beta}{\alpha+\beta} \\
 			(p_i-\beta,p_j+\beta) &\textrm{with probability } \frac{\alpha}{\alpha+\beta}
			\end{array} \right.
		\end{align*}
		\ENDWHILE
		\RETURN  $\{i : p_i = 1, 1 \geq i \geq K \}$
	\end{algorithmic}
\end{algorithm}
To efficiently select a set of $m$ distinct arms from $[K]$,  we use a nice technique called dependent rounding (DepRound) \cite{Gandhi2006}, see Algorithm \ref{alg:DepRound}. DepRound takes as input the subset size $m$ and the arm probabilities $(p_1, p_2, \cdots, p_K)$ with $\sum_{i=1}^K p_i=m$. It updates the probabilities until all the components are $0$ or $1$ while keeping the
sum of probabilities unchanged, i.e., $m$. The while-loop is executed at most
$K$ times since at least one of $p_i$ and $p_j$ becomes $0$ or $1$ in each time of the
execution. The algorithm updates the probabilities in a randomized manner such that it keeps the expectation values of
$p_i$ the same, namely, $E[p^{t+1}_i ] = E[p^t_i]$ for every $i \in [K]$, where $p^t_i$
denotes $p_i$ after the $t^{th}$
execution of the inside of the while-loop. This follows from
\begin{align}
&(p_i + \alpha) \frac{\beta}{\alpha+\beta} + (p_i - \beta)  \frac{\alpha}{\alpha+\beta} = \nonumber \\
&(p_i - \alpha) \frac{\beta}{\alpha+\beta} + (p_i + \beta)  \frac{\alpha}{\alpha+\beta} = p_i,
\end{align}
which indicates that each arm in the output is selected by its marginal probability $p_i$.
Since the while-loop is executed at most
$K$ times, DepRound
runs in $O(K)$ time and $O(K)$ space.
\subsection{Arm Capping}
\begin{algorithm}[t!]
\algsetup{linenosize=\small}
\small
	\caption{Capping algorithm}
	\begin{algorithmic}[1]
	\label{alg:cap}
		\STATE \textbf{Input:} The subset size $m(<K)$, $(v_1,v_2,\cdots,v_K)$ with $\sum_{j=1}^K v_i=1$ \label{cap:l1}
		\STATE $\bm{v^\downarrow}$  $\leftarrow$ Sort $(v_1,v_2,\cdots,v_K)$ in a descending order \label{cap:l2}
		\STATE $\textrm{\textbf{indices}}^\downarrow$ $\leftarrow$ Keep the original indices of the sorted weights \label{cap:l3}
		\STATE $\textrm{upper\_bound} = \frac{(1/m) -(\gamma/K)}{(1-\gamma)}$ \label{cap:l4}
		\STATE $i \leftarrow 1$	\label{cap:l6}
		\STATE $\textbf{temp} \leftarrow \bm{v^\downarrow}$ \label{cap:l7}
		\REPEAT
		\STATE ( Set first $i$ largest components to $\textrm{upper\_bound}$ and normalize the rest to $(1- i*\textrm{upper\_bound})$ ) \label{cap:l8}
		\STATE $\textbf{temp} \leftarrow \bm{v^\downarrow}$ \label{cap:l9}
		\STATE $\textrm{temp}(j)= \textrm{upper\_bound}$ for $j=1,\cdots,i$ \label{cap:l10}
		\STATE $\textrm{temp}(j)= (1- i*\textrm{upper\_bound} ) \frac{\textrm{temp}(j)}{\sum_{l=i+1}^K \textrm{temp}(l)}$ for $j=i+1,\cdots,K$ \label{cap:l11}
		\STATE $i \leftarrow i+1$ \label{cap:l12}
		\UNTIL{$\max(\textbf{temp}) \leq \textrm{upper\_bound}$} \label{cap:l13}
		\STATE $(v_1,v_2,\cdots,v_K) \leftarrow$ Replace the entries of \textbf{temp} by using $\textrm{\textbf{indices}}^\downarrow$  \label{cap:l14}
		\RETURN 
	\end{algorithmic}
\end{algorithm}
In this section, we describe how we find the threshold $\alpha_t$ and cap the weights, i.e the lines \ref{alg1:cap_init}-\ref{alg1:cap_fin} in Algorithm \ref{alg:algexp4m}, and the lines \ref{alg2:cap_init}-\ref{alg2:cap_fin} in Algorithm \ref{alg:algexp3ms}. The
presented algorithm in Algorithm \ref{alg:cap} simultaneously finds the threshold $\alpha_t$ and caps the weights. \par
In the algorithm, we start with sorting the arm weights in a descending order (line \ref{cap:l2}). Then, we set the largest $i$ arm weights to $\frac{(1/m) -(\gamma/K)}{(1-\gamma)}$ (line \ref{cap:l10}) and normalize the other weights (line \ref{cap:l11}) so that the sum of the
weights stays $1$. By the lines \ref{cap:l10}, \ref{cap:l11}, and \ref{cap:l13}, we aim to satisfy
\begin{align}
\label{cap:eq}
\frac{\alpha_t}{\sum\limits_{v_j(t) \geq \alpha_t} \alpha_t + \sum\limits_{v_j(t) < \alpha_t} v_j(t) }&= \frac{(1/m) -(\gamma/K)}{(1-\gamma)}
\end{align}
subject to $\max(v_j(t)) = \alpha_t$. Since $\frac{(1/m) -(\gamma/K)}{(1-\gamma)} > \frac{1}{m}$, there is always an $i < m$ that satisfies Eq. (\ref{cap:eq}) and it can be found in $O(m)$ step. After finding $i$, the algorithm
replaces the capped arm-weights (line \ref{cap:l14}) and returns. Since the most expensive
operation in the algorithm is sorting, the algorithm requires
$O(K \log K)$ time complexity.
\section{}
\label{appb}
\begin{proof}[\textbf{Proof of Theorem \ref{the4}}]
Define $q_i(t)= w_i(t)/W_t$, where $W_t= \sumNr w_i(t)$, and $\yt=\hat{y}_i(t) + c \vh/\sqrt{KT}$. By following the first steps of the proof of \textit{Exp3.M} (up to inequality (4) in \cite{Uchiya2010}), we can write
{\small
\begin{equation}
\lnratioT \leq \eta \sumT \sumNr q_i(t) \yt + \eta^2 \sumT \sumNr q_i(t) \yt^2
\end{equation}
}with the assumption of $\eta \yt \leq 1$. By using the AM-GM inequality, we get:
{\small
\begin{align}
\lnratioT &\geq \sumrA \frac{\ln(w_r(T+1))}{m} - \ln \frac{W_1}{m} \nonumber \\
&= \frac{\eta}{m} \sumT \sumrA \yrt + \frac{1}{m} \sumrA \ln(w_r(1)) - \ln \frac{W_1}{m}. \label{1}
\end{align}
}where $A^*$ is the set defined in (\ref{astar}). To bound the terms with $\yt$, we give two useful facts:
{\small
\begin{equation}
\sumNr \frac{w_i(t)\zeta^i_j(t)}{W_t}=v_j(t) \leq \frac{v_j'(t)}{\sum_{l=1}^K v_l'(t)} \leq \frac{p_j(t)}{m(1-\gamma)} \textrm{,  } j \in [K]-U_0(t)\label{fact1}
\end{equation}
}where we use {\small $\sum_{l=1}^K v_l'(t) \leq \sum_{l=1}^K v_l(t) = 1$}. For the second fact, let us say {\small $d_i(t)=\sum_{j \in [K] - U_0(t)} \zeta^i_j(t)$}. Then, {\small
\begin{align}
\sumNr \frac{w_i(t)}{W_t} \hat{y}_i(t)^2 &= \sumNr \wfr d_i(t)^2 \Big( \sumTh \frac{\zeta^i_j(t)}{d_i(t)} \xh \Big)^2 \nonumber \\
&\leq \sumNr \wfr \Big( \sumTh \zeta^i_j(t) \xh^2 \Big) \label{evb2} \\
&\leq \frac{1}{m(1-\gamma)}   \sumK \xh \label{evb3}
\end{align}
}where we use {\small $E[X]^2 \leq E[X^2]$} 
and $d_i(t) \leq 1$ in (\ref{evb2}), then 
(\ref{fact1}) and {\small $p_j(t)\xh \leq 1$} in (\ref{evb3}). Next, we bound 
terms with $\yt$:
{\small
\begin{align}
&\sumNr q_i(t) \yt =  \sumNr \wfr \Big( \sumTh \zeta^i_j(t) \xh + \frac{c}{\sqrt{KT}} \frac{\zeta^i_j(t)}{p_j(t)} \Big) \nonumber \\
&= \sumTh \sumNr \frac{w_i(t) \zeta^i_j(t)}{W_t} \Big(\xh + \frac{c}{p_j(t) \sqrt{KT}}  \Big)  \nonumber \\
&\leq \frac{1}{m(1-\gamma)} \Big( \sumTh p_j(t) \xh \Big) + \frac{c}{m(1-\gamma)} \sqrt{\frac{K}{T}}. \label{2}
\end{align}
}{\small
\begin{align}
&\sumNr q_i(t) \yt^2 \leq \sumNr \wfr \Big( \yh + \ub \Big)^2 \nonumber \\
&\leq \frac{2}{m(1-\gamma)} \Big( \sumK \xh \Big) + \frac{2c^2}{KT} \frac{K}{m \gamma} \sumNr \wfr \! \sumTh \! \frac{\zeta^i_j(t)}{p_j(t)} \label{3} \\
& \leq \frac{2}{m(1-\gamma)} \Big( \sumK \xh \Big) + \frac{2c^2K}{T m^2 \gamma (1-\gamma)} \label{5}
\end{align}
}where we use $(a+b)^2 \leq 2(a^2+b^2)$ 
and $\hat{u}_i(t) \leq K/(\gamma m)$ in (\ref{3}). By using (\ref{1}), (\ref{2}) and (\ref{5}) and by noting that $p_j(t)=1$ for $j\in U_0(t)$, we get: 
{\small
\begin{align}
&\frac{\eta}{m} \sumT \sumrA \bm{\zeta}^r(t) \cdot \bm{\hat{x}}(t) + \frac{\eta}{m} \frac{c}{\sqrt{KT}} \sumT \sumrA \hat{v}_r(t)  - \ln \! \frac{W_1}{m} \nonumber \\
&+ \frac{1}{m} \! \sumrA \ln(w_r(1)) \! \leq \! \frac{\eta}{m(1-\gamma)} G_{Exp4.MP}  + \frac{\eta c \sqrt{KT}}{m(1-\gamma)}    \nonumber \\
& + \frac{2 \eta^2 c^2 K}{\gamma m^2 (1-\gamma)}+\frac{2 \eta^2}{m(1-\gamma)} \sumT \sumK \xh
\end{align}
}where {\small $\bm{\zeta}^r(t) \cdot \bm{\hat{x}}(t)= \sumK \zeta^r_j(t) \xh$}. By dividing both sides with {\small $\eta/(m(1-\gamma))$}, and noting that {\small $ \sumT \sumK \xh \leq (K/m) \hat{\Gamma}_{A^*}$}, the statement in the theorem can be obtained.
\end{proof} %
\begin{proof}[\textbf{Proof of Corollary \ref{probcor}}]
The proof consists of two steps. First, we prove an auxiliary result to help us to derive high-probability bound. Second, by using the auxiliary result and Theorem \ref{the4} we prove the statement in the corollary. In the first step, we use the beautiful martingale property given in Theorem 1 in \cite{Beyg2011}. Let us say, {\small$\yit=y_i(t)-\yh$} for any fixed $i \in [N_r]$, where {\small $y_i(t)=\sumTh \zeta^i_j(t)x_j(t)$} and {\small$\yh= \sumTh \zeta^i_j(t) \xh$}. We point out that
\begin{equation*}
E[\yit]=0, \qquad \yit \leq 1, \qquad E[\yit ^2]\leq \hat{u}_i(t).
\end{equation*}Let us define
{\small \begin{equation}
\label{vv}
V' \define \frac{KT}{m} \normalsize{\textrm{ and }} 
\sigma_i \define \sqrt{\frac{m}{KT}} \sumT \vh + \sqrt{\frac{KT}{m}}.
\end{equation}}With the assumption of $\ln (N_r/\delta) \leq (e-2) KT/m$, by Theorem 1 in \cite{Beyg2011}, we can write
{\small \begin{equation}
\textbf{Pr}\Big[\sumT \yit \geq \sqrt{(e-2)\ln \frac{N_r}{\delta}}\sigma_i\Big] \leq \frac{\delta}{N_r}
\end{equation}}for any $i \in [N_r]$. By applying union of events over the set $[N_r]$, and noting $(e-2)<1$, we get
{\small \begin{equation}
\label{event1}
\textbf{Pr}\Big[\forall i \in [N_r] : \sumT \yit \leq \sqrt{\ln \frac{N_r}{\delta}}\sigma_i\Big] \geq 1 - \delta
\end{equation}}Since the event in (\ref{event1}) includes every $i \in [N_r]$, we can sum any $m$ of them without changing the 
bound. Then we get  
{\small
\begin{equation}
\textbf{Pr}\Big[\forall A \in \textbf{C}([N_r],m) : \sum_{i \in A} \sumT \yit \leq \sqrt{\ln\frac{N_r}{\delta}}\sum_{i \in A}  \sigma_i\Big] \geq 1 - \delta.
\end{equation}
}Note that {\small$\sumT \yit = \sumT \sumTh \zeta^i_j(t)(x_j(t)-\xh)$}. Since $x_j(t)=\xh$ for $j \in U_0(t)$, we can equivalently write
{\small
\begin{equation}
\label{aux}
\textbf{Pr}\Big[\forall A \in \textbf{C}([N_r],m) : \sum_{i \in A} G_i - \hat{G}_i \leq \sqrt{\ln\frac{N_r}{\delta}}\sum_{i \in A}  \sigma_i\Big] \geq 1 - \delta.
\end{equation}}Lastly, we point out that, according to \eqref{vv},
{\small
$$\sqrt{\ln\frac{N_r}{\delta}}\sum_{i \in A}  \sigma_i= \sqrt{m \ln\frac{N_r}{\delta}} \Big(\frac{1}{\sqrt{KT}} \sumT \sum_{i \in A}  \vh + \sqrt{KT}\Big).$$}\par
In the second step, we first observe that our parameter selection satisfies the assumption (2) in Theorem \ref{the4}. Then, we restate the result of the theorem when $w_i(1)=1$ $\forall i \in [N_r]$, and $\eta= m\gamma/(2K)$:
{\small
\begin{equation}
(1-2\gamma) \Gamma_{A^*} - (1- \gamma) \frac{2K}{\gamma} \ln \frac{N_r}{m} \leq G_{Exp4.MP} + c\sqrt{KT}+c^2.
\end{equation}
}By using (\ref{aux}), 
$\Gamma_{A^*}$ from (\ref{astar}), and noting {\small $c=\sqrt{m \ln(N_r/\delta)}$}, we get {\small $\textbf{Pr}[G_{max} \leq \Gamma_{A^*} + c\sqrt{KT}] \geq 1 - \delta$}. Then,
\begin{equation}
G_{max} - G_{Exp4.MP} \leq \frac{2K}{\gamma} \ln \frac{N_r}{m} + 2 \gamma G_{max} + 2 c \sqrt{KT} + c^2
\end{equation}
holds with probability at least $1-\delta$. We point out that since $\gamma$ has a small value, we ignore $(1- \gamma)$ and $(1-2\gamma)$ terms at the right hand side. In the end, by noting that $G_{max} \leq mT$ and using the given parameters in the corollary, the statement can be obtained.
\end{proof}%

\section{}
\label{appc}
Before the proof, we give one technical lemma:
\begin{lemma} \label{l1}
For any $S>1$ and $T > S$,
{\small
$$e^{1-S} \leq (1- \frac{S-1}{T-1})^{T-S}.$$
}\end{lemma}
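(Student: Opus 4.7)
The plan is to take logarithms and reduce to the elementary inequality $\ln(z) \leq z-1$ (valid for all $z > 0$).

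\medskip

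\noindent\textbf{Step 1 (take logarithms).} Since $T > S > 1$ gives $(S-1)/(T-1) \in (0,1)$, the base is positive, so taking $\ln$ of both sides, the claim is equivalent to
\begin{equation*}
1 - S \;\leq\; (T-S)\,\ln\!\Big(1-\tfrac{S-1}{T-1}\Big).
\end{equation*}

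\noindent\textbf{Step 2 (change of variable).} Substitute $x \define (S-1)/(T-1) \in (0,1)$. Then $S-1 = x(T-1)$ and $T-S = (1-x)(T-1)$, so dividing both sides of the previous inequality by $T-1 > 0$ reduces the claim to
\begin{equation*}
-x \;\leq\; (1-x)\ln(1-x),
\end{equation*}
or, after dividing by $(1-x) > 0$, to $\ln(1-x) \geq -x/(1-x)$.

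\medskip

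\noindent\textbf{Step 3 (reduce to the standard inequality).} Let $u \define 1-x \in (0,1)$; the inequality becomes $\ln u \geq 1 - 1/u$, equivalently $\ln(1/u) \leq 1/u - 1$. Setting $z \define 1/u > 1$, this is exactly
\begin{equation*}
\ln z \;\leq\; z - 1,
\end{equation*}
which holds for every $z > 0$ (it is the tangent-line inequality for the concave function $\ln$ at $z=1$). Unwinding the substitutions yields the claim.

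\medskip

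There is no real obstacle here; the lemma is a one-variable calculus fact once one makes the change of variable $x = (S-1)/(T-1)$, which absorbs both parameters $S$ and $T$ into a single variable in $(0,1)$ and reveals the inequality as a rewriting of $\ln z \leq z-1$. The only thing to be careful about is checking that the exponent $T-S$ and the denominator $1-x$ are strictly positive so that the direction of the inequality is preserved when one divides through.
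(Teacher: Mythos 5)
Your proof is correct and takes essentially the same route as the paper: both arguments take logarithms and reduce the claim to the tangent-line inequality $\ln z \leq z-1$ (equivalently $\ln(1+u)\leq u$), which the paper applies directly with $u=\frac{S-1}{T-S}$ and you reach via the substitution $x=\frac{S-1}{T-1}$. The extra change of variable is just a cosmetic difference; the key inequality and its point of application are identical.
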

\begin{proof}
By taking the natural logarithms of both sides, we get $S-1 \geq \ln ( 1+ \frac{S-1}{T-S})^{T-S}$. The fact that $\ln(1+\frac{\alpha}{x})^x \leq \alpha$ for $x\geq 0$ completes the proof .
\end{proof}
\begin{proof}[\textbf{Proof of Theorem \ref{switchcor}}]
In this proof, we again begin with proving an auxiliary statement to derive high-probability regret bound. Fix $\sT$ and say $\xst=x_{\sT(t)}-\hat{x}_{\sT(t)}$. Then, 
\begin{equation*}
E[\xst]=0, \qquad \xst \leq 1, \qquad E[\xst ^2]\leq 
1/p_{\sT(t)}.
\end{equation*}
We define
{\small
\begin{align}
&\Delta=\frac{\delta'}{K} \Big(\frac{\beta}{K-1} \Big)^{\Sw-1} (1-\beta)^{T-\Sw} \\
&\Delta'=\frac{\delta'}{K} \Big(\frac{\beta}{K} \Big)^{\Sw-1} (1-\beta)^{T-\Sw}
\end{align} 
}where $\delta'\in[0,1]$ and {\small$\beta=\betav$}. We note that since $S$ is the same for all the elements of the set $Z$, $\Delta$ and $\Delta'$ are arbitrary constants in $[0,1]$. We use the same $V'$ in (\ref{vv}) and define  
{\small \begin{align*}
\sigma_{\sT} \define \!\sqrt{\frac{m}{KT}} \sumT \! \frac{1}{p_{\sT(t)}} +  \sqrt{\frac{KT}{m}}.
\end{align*}}With the assumption of $\ln (1/\Delta') \leq (e-2) KT/m$, by Theorem 1 in \cite{Beyg2011}, we can write
{\small
\begin{equation}
\textbf{Pr}\bigg[\sumT  \xst \geq \sqrt{(e-2) \ln \frac{1}{\Delta'}} \sigma_{\sT} \bigg] \leq \Delta.
\end{equation}
}Applying a union bound over 
$Z$ and noting 
$\sum_{Z} \Delta \leq \delta$, 
{\small
\begin{equation}
\textbf{Pr}\bigg[\forall \sT\! \in \! Z\! : \! \sumT \!  \xst \! \leq \! \sqrt{\ln \frac{1}{\Delta'}}  \sigma_{\sT} \bigg] \! \geq \! 1-\delta. \label{i2}
\end{equation}
}In order to get a clear expression, we aim to write $\ln \frac{1}{\Delta'}$ in terms of $\delta$. Therefore, we aim to satisfy
\begin{equation}
\label{satistfy}
\delta^{\Sw-1} \leq \beta^{\Sw-1} (1-\beta)^{T-\Sw}.
\end{equation} 
By Lemma \ref{l1}, {\small$\delta' \leq \betave$} satisfies inequality (\ref{satistfy}). Then, by writing
{\small$\delta'=  \betave \delta$}, where $\delta \in [0,1]$, and summing any $m$ $\sT$ in (\ref{i2}), we get
{\small
\begin{align}
\textbf{Pr}\bigg[\forall A\! \in \! C(Z,m) \! : \! \sum_{i \in A} \sumT \!  \xst \! \leq \!  c \sigma_A \bigg] \! \geq \! 1- \betave \delta'. \label{i3}
\end{align}
}where 
{\small \begin{equation}
\sigma_A=\Big(\frac{1}{\sqrt{KT}} \sumT \sum_{i \in A}  \frac{1}{p_{\sT(t)}} + \sqrt{KT}\Big)  \nonumber
\end{equation}}and $c$ is one of the given parameters in the theorem. \par
In the second step, we first observe that our parameter selection satisfies the assumption (2) in Theorem \ref{the4}. Second, we introduce a new notation $\{\textbf{s}^1_T,\cdots,\textbf{s}^m_T\}\in\textbf{M}_T$, which means that 
$\MT$ can be written as the combination of single arm sequences $\textbf{s}^1_T,\cdots,\textbf{s}^m_T$. We point out that the single arm sequences  $\{\textbf{s}^1_T,\cdots,\textbf{s}^m_T \}\in \mb$ can be selected the ones with the same switching instants, and the same segment number. Then, their prior weights become 
{\small
\begin{equation}
\label{m2}
w_{\sT}(1)= \frac{1}{K} \Big( \frac{\beta}{K-1} \Big)^{\Sw-1} (1-\beta)^{T-\Sw} 
\end{equation}
}for $\{\textbf{s}^1_T,\cdots,\textbf{s}^m_T \}\in \mb$. To have a bound 
w.r.t. $\mb$, we
define
{\small
\begin{equation}
\label{m1}
\hat{\Gamma}_{\mb}  \define  \hat{G}_{\mb}+  \sumT \summb  \vst. 
\end{equation}
}We note that $\hat{\Gamma}_{\mb}  \leq \hat{\Gamma}_{A^*}$. We also point out that $W_1=1$ by our prior scheme.  
Then by using $\eta=\etav$, (\ref{m1}), and (\ref{m2}) in inequality (\ref{regretformula}), we can write
{\small
\begin{align}
&(1-2 \gamma) \Big(\hat{G}_{\mb}+ \sumT \summb \vst + c \sqrt{KT} \Big) - 2c\sqrt{KT} \nonumber \\
& - \frac{2K}{\gamma} \ln \frac{K^{\Sw}}{ \beta^{\Sw-1} (1-\beta)^{T-\Sw}}  + c^2 \leq G_{Exp4.MP}
\end{align}
}where we use $\gamma <0.5$, $w_{\sT}(1) \leq 1$, and
{\small
\begin{equation}
w_{\sT}(1)\geq \frac{1}{K} \Big( \frac{\beta}{K} \Big)^{\Sw-1} (1-\beta)^{T-\Sw} \textrm{ for } \{\textbf{s}^1_T,\cdots,\textbf{s}^m_T \}\in \mb.
\end{equation} }By (\ref{i3}) and Lemma \ref{l1}, if we use the given $c$ and $\beta$ values, 
{\small
\begin{align}
\label{m3}
&(1-2\gamma) G_{\mb} -G_{Exp4.MP} \leq  \frac{2K \Sw}{\gamma} \ln \frac{eK(T-1)}{\Sw-1} \nonumber \\
&+ 2 \sqrt{mK \Sw T \lnsd} + m \Sw \lnsd
\end{align}
}holds with probability at least {\small$1-\betave \delta$}. By noting $G_{\mb}\leq mT$ and using the given $\gamma$ value, the statement in the theorem can be obtained. 
\end{proof}

\begin{proof}[\textbf{Proof of Theorem \ref{theoem51ext}}]
We use $\nst$, $\Nstm$ defined in Section \ref{sec:exp3ms}, and
{\small \begin{equation}
\nj= (\xh+ \frac{c}{p_j(t) \sqrt{KT}})\1_{j \not \in U_0(t)} \textrm{ for } j \in [K].
\end{equation}}Let $\wst$ be the weight of an arbitrary sequence $\st$ , $r$ be an arbitrary arm, and $\overline{v}_r(t)$ be the weight of the arm $r$ in the hypothetical \textit{Exp4.MP} run at round $t$, given by 
{\small \begin{equation}
\overline{v}_r(t)= \sumsrt \frac{\wst}{\sumst \wst}.
\end{equation}}In the proof, we use mathematical induction to show $\overline{v}_r(t)=v_r(t)$ for any $r \in [K]$ and $t=1,2,\cdots,T$. The proof begins with noting $v_j(1)=w_{\textbf{s}_1}=1/K$. Then,
{\small
\begin{align}
\overline{v}_r(t)&= \sumsrt \frac{\wst}{\sumst \wst} = \sumsrt \frac{\pst \exp(\eta \Nstm)}{\sumst \wst} \nonumber  \\
&= \sumsrt \frac{\psst \pstm  \exp(\eta \Nstm)}{\sumst \wst} \nonumber  \\
&= \sumstm \frac{\wstm \exp(\eta \nstm) \psst}{\sumst \wst} \nonumber  \\
&\textrm{Assuming $\sumsjtm \wstm \propto v_j(t-1)$ for $j \in [K]$} \label{assumption}\\
&= \sumK \frac{v_j(t-1) \exp(\eta \njm ) \psst}{\sum_{l \in [K]} v_l(t-1) \exp(\eta \nlm )} \nonumber  \\
&= \sumK \frac{\tilde{v}_j(t-1) \Big(\frac{\beta}{K-1}\1_{j \neq r} + (1-\beta)\1_{j=r}\Big)}{\sum_{l \in [K]} \tilde{v}(t-1)} = v_r(t). \label{result}
\end{align}
}Since our assumption in (\ref{assumption}) holds for $t=1$, by (\ref{result}) it holds for all $t$. Then, the theorem holds for all $t$ as well. 
\end{proof}

\bibliographystyle{IEEEtran}
\balance
\bibliography{my_references}

\vfill

\end{document}